\documentclass[runningheads]{llncs}
%
\usepackage[english]{babel}
\usepackage[utf8x]{inputenc}
\usepackage[T1]{fontenc}
\usepackage{graphicx}
\usepackage{amsfonts,amssymb,amsmath}
\usepackage[ruled,linesnumbered,vlined]{algorithm2e}
\usepackage[noabbrev,capitalise]{cleveref}
\usepackage[misc]{ifsym} 
\usepackage{siunitx} 
\usepackage[caption=false]{subfig} 
\spnewtheorem{assumption}{Assumption}{\bfseries}{\itshape}
\crefname{assumption}{Assumption}{Assumptions}
\usepackage[dvipsnames]{xcolor}
\usepackage{tikz}
\usetikzlibrary{shapes,arrows,calc,spy}
\usepackage{lscape}
\usepackage{pgfplots}
\pgfplotsset{compat=newest}
\usetikzlibrary{plotmarks}
\usetikzlibrary{arrows.meta}
\usepgfplotslibrary{patchplots}
\usepackage{grffile}
\definecolor{mycolor1}{rgb}{0.00000,0.44700,0.74100}%
\definecolor{mypurple}{rgb}{0.5961,0.3059,0.6392}%
\definecolor{mygreen}{rgb}{0.7020,0.8706,0.4118}%
\definecolor{myorange}{rgb}{0.9843,0.5020,0.4471}%
%
%
\graphicspath{{./img/}}
%
\DeclareMathOperator*{\argmin}{\arg\!\min}
\DeclareMathOperator*{\argmax}{\arg\!\max}
\def\arbo{{\sc{arborescent}}}
\def\brassens{{\sc{bras\-sens}}}
\DeclareMathOperator{\col}{col} 
\DeclareMathOperator{\conv}{conv} 
\DeclareMathOperator{\rank}{rank}
%
\usepackage{todonotes}

\definecolor{brightpink}{rgb}{1.0, 0.0, 0.5}


%
\begin{document}
\title{Sparse Separable Nonnegative Matrix Factorization}
%
%
\author{Nicolas Nadisic\inst{1}\Letter{} \and
  Arnaud Vandaele\inst{1} \and
  Jeremy E. Cohen\inst{2} \and
  Nicolas Gillis\inst{1}}
%
\authorrunning{N. Nadisic et al.} 
%
\institute{University of Mons, Belgium.\\
\email{\{nicolas.nadisic, arnaud.vandaele, nicolas.gillis\}@umons.ac.be} \and Univ Rennes, Inria, CNRS, IRISA, Rennes, France.\\
\email{jeremy.cohen@irisa.fr}
}
\maketitle              
%
%
%
\begin{abstract}

We propose a new variant of nonnegative matrix factorization (NMF), combining separability and sparsity assumptions. 
Separability requires that the columns of the first NMF factor are equal to columns of the input matrix, while sparsity requires that the columns of the second NMF factor are sparse. 
We call this variant sparse separable NMF (SSNMF), which we prove to be NP-complete, as opposed to separable NMF which can be solved in polynomial time.  
The main motivation to consider this new model is to handle underdetermined blind source separation problems, such as multispectral image unmixing.  
We introduce an algorithm to solve SSNMF, based on the successive nonnegative projection algorithm (SNPA, an effective algorithm for separable NMF), and an exact sparse nonnegative least squares solver.
We prove that, in noiseless settings and under mild assumptions, our algorithm recovers the true underlying sources.
This is illustrated by experiments on synthetic data sets and the unmixing of a multispectral image.

\keywords{Nonnegative Matrix Factorization  \and Sparsity \and Separability.}
\end{abstract}

\section{Introduction}
\label{sec:intro}

Nonnegative Matrix Factorization (NMF) is a low-rank model widely used for feature extraction in applications such as multispectral imaging, text mining, or blind source separation; see~\cite{gillis_why_2014,fu2019nonnegative} and the references therein. 
Given a nonnegative data matrix $M \in \mathbb{R}_+^{m \times n}$ and a factorization rank $r$, NMF consists in finding two nonnegative matrices $W \in \mathbb{R}^{m \times r}_+$ and $H \in \mathbb{R}^{r \times n}_+$ such that $M \approx W H$.
NMF can be formalized as the following optimization problem:
\begin{equation}
    \min_{W \geq 0, H \geq 0} \| M - WH \|_F^2 .
\end{equation}
In this paper, we use the Frobenius norm to measure the quality of the approximation.
Although other measures are possible, the Frobenius norm is by far the most commonly used, because it assumes Gaussian noise (which is reasonable in many real-life applications) and allows for efficient computations~\cite{gillis_why_2014}.

One of the advantages of NMF over similar methods such as principal component analysis (PCA) is that the nonnegativity constraint favors a part-based representation~\cite{lee1999learning}, which is to say that the factors are more easily interpretable, in particular when they have a physical meaning.
If each column of $M$ represents a data point, then each corresponding column of $H$ contains the coefficients to reconstruct it from the $r$ atoms represented by the columns of $W$, since $M(:,j) \approx WH(:,j)$ for all $j$.
Every data point is therefore expressed as a linear combination of atoms.
For example, when using NMF for multispectral unmixing, a data point is a pixel, an atom is a specific material, and each column of $H$ contains the abundance of these materials in the corresponding pixel; see Section~\ref{sec:xp-real} for more details. 
Geometrically, the atoms (columns of $W$) can be seen as $r$ vertices whose convex hull contains the data points (columns of $M$), under appropriate scaling.

\subsection{Separability}

In general, computing NMF is NP-hard~\cite{vavasis_complexity_2010}. 
However, Arora et al.~\cite{arora_computing_2012} proved that NMF is solvable in polynomial time under the \emph{separability}  assumption on the input matrix. 
\begin{definition}\label{def:separability}
A matrix $M$ is $r$-separable if there exists a subset of $r$ columns of $M$, indexed by $\mathcal{J}$, and a nonnegative matrix $H \geq 0$, such that $M = M(:,\mathcal{J}) H$.
\end{definition}
Equivalently, $M$ is $r$-separable if $M$ has the form 
$M = W [ I_r , H' ] \Pi$, where $I_r$ is the identity matrix of size $r$, $H'$ is a nonnegative matrix, and $\Pi$ is a permutation. 
Separable NMF consists in selecting the right $r$ columns of $M$ such that $M$ can be reconstructed perfectly.
In other words, it consists in finding the atoms (columns of $W$) \emph{among} the data points (columns of $M$).
\begin{problem}[Separable NMF]
Given a $r$-separable matrix $M$, find $W = M(:,\mathcal{J})$ with $|\mathcal{J}| = r$  and $H \geq 0$ such that $M = W H$.
\end{problem}
Note that, if $W$ is known, the computation of $H$ is straightforward: it is a convex problem that can be solved using any nonnegative least squares (NNLS) solver (for example, it can be solved with the Matlab function \texttt{lsqnonneg}). 
However, the solution is not necessarily unique, unless $W$ is full rank.

In the presence of noise, which is typically the case in real-life applications, this problem is called near-separable NMF and is also solvable in polynomial time given that the noise level is sufficiently small~\cite{arora_computing_2012}. In this case, we are given a near-separable matrix $M \approx M(:,\mathcal{J}) H$ where $|\mathcal{J}| = r$ and $H \geq 0$.

\subsection{Successive Nonnegative Projection Algorithm}

Various algorithms have been developed to tackle the (near-)separable NMF problem.
Some examples are the successive projections algorithm (SPA)~\cite{araujo_successive_2001}, the fast canonical hull algorithm~\cite{kumar_fast_2013}, or the successive nonnegative projections algorithm (SNPA)~\cite{gillis_successive_2014}. 
Such algorithms start with an empty matrix $W$ and a residual matrix $R = M$, and then alternate between two steps: a greedy selection of one column of $R$ to be added to $W$, and an update of $R$ using $M$ and the  columns extracted so far.  
As SNPA was shown, both theoretically and empirically, to perform better and to be more robust than its competitors~\cite{gillis_successive_2014}, it is the one we study here in detail. Moreover, SNPA is able to handle the underdetermined case when $\rank(W) < m$ which will be key for our problem setting (see below for more details). 

SNPA is presented in \cref{alg:snpa}.
\begin{algorithm}[tp]
  \caption{SNPA}\label{alg:snpa}

  \KwIn{A near-separable matrix ${M}  \in \mathbb{R}^{m \times n}$, the number $r$ of columns to be extracted, and a strongly convex function $f$ with $f(0)=0$ (by default, $f(x) = \| x \|_2^2$).}

  \KwOut{A set of $r$ indices $\mathcal{J}$, and a matrix $H  \in \mathbb{R}^{r \times n}_{+}$ such that ${M} \approx {M}(:,\mathcal{J}) H$.} 

  \vspace{0.2cm}
  \DontPrintSemicolon
  
  Init $R \gets {M}$\\
  Init $\mathcal{J} = \{\}$\\
  Init $t = 1$
  
  \While{$R \neq 0 \And{} t \leq r$}
  {
    $ p = \argmax_j f (R(:,j))$\\
    $\mathcal{J} = \mathcal{J} \cup \{ p \}$\\
    
    \ForEach{j}
    {
      $H^{*}(:,j) = \argmin\limits_{h \in \Delta} f({M}(:,j) - {M}(:,\mathcal{J}) h)$\\\label{alg:snpa-proj}
      $ R(:,j) = {M}(:,j) - {M}(:,\mathcal{J}) H^{*}(:,j) $ \\
    }
    t = t + 1
  }
\end{algorithm}
SNPA selects, at each step, the column of $M$ maximizing a function $f$ (which can be any strongly convex function such that $f(0)=0$, and $f = ||.||_2^2$ is the most common choice).
Then, the columns of $M$ are projected onto the convex hull of the origin and the columns extracted so far, see step 8 where we use the notation 
\[
\Delta = \Big\{ h \ \big| \ h \geq 0, \sum_i h_i \leq 1 \Big\}, 
\]
whose dimension is clear from the context. 
After $r$ steps, given that the noise is sufficiently small and that the columns of $W$ are vertices of $\conv(W)$, SNPA is guaranteed to identify $W$. 
An important point is that SNPA requires the columns of $H$ to satisfy $\|H(:,j)\|_1 \leq 1$ for all $j$, where $\|x\|_1 = \sum_i |x_i|$ is the $\ell_1$ norm. This assumption can be made without loss of generality by properly scaling the columns of the input matrix to have unit $\ell_1$ norm; see the discussion in~\cite{gillis_successive_2014}.

\subsection{Model Limitations}

Unfortunately, some data sets cannot be handled successfully by separable NMF, even when all data points are linear combinations of a subset of the input matrix. 
In fact, in some applications, the columns of the basis matrix $W$, that is, the atoms, might not be vertices of $\conv(W)$.
This may happen when one seeks a matrix $W$ which is not full column rank.
For example, in multispectral unmixing, $m$ is the number of spectral bands which can be smaller than $r$, which is the number of materials  present in the image; see Section~\ref{sec:xp-real} for more details. 
Therefore, it is possible for some columns of $W$ to be contained in the convex hull of the other columns, that is, to be additive linear combinations of others columns of $W$; see \cref{fig:examples} for illustrations in three dimensions (that is, $m = 3$).
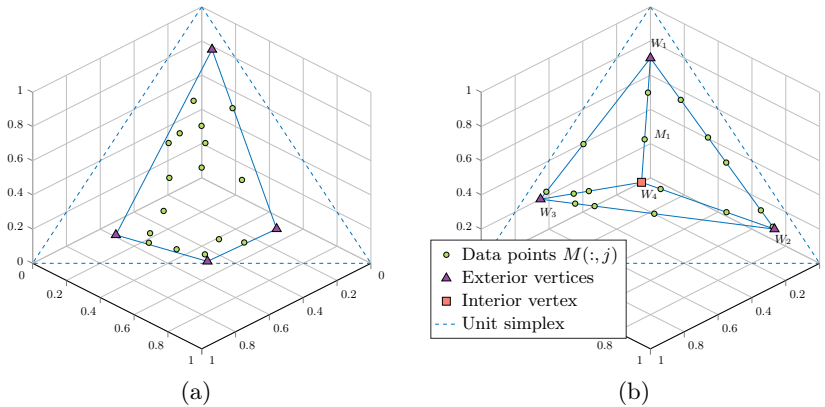
\begin{figure}[t]
    \centering
    \subfloat[\label{fig:allext}]{%
%
%
%
\begin{tikzpicture}[scale=0.5]

\begin{axis}[%
width=3.541in,
height=3.583in,
at={(0.594in,0.484in)},
scale only axis,
xmin=0,
xmax=1,
tick align=outside,
ymin=0,
ymax=1,
zmin=0,
zmax=1,
view={135}{35.2643896827547},
axis background/.style={fill=white},
axis x line*=bottom,
axis y line*=left,
axis z line*=left,
xmajorgrids,
ymajorgrids,
zmajorgrids,
]
\addplot3[only marks, mark=*, mark options={}, mark size=2pt, draw=black, fill=mygreen] table[row sep=crcr]{%
x	y	z\\
0.402129827750314	0.503849733367964	0.0940204388817217\\
0.429260773024408	0.237577429891513	0.333161797084079\\
0.311707558079109	0.181316656478482	0.506975785442409\\
0.616315685064004	0.303404179364213	0.0802801355717832\\
0.473228051783283	0.491824675240473	0.0349472729762437\\
0.333739933493538	0.584942121404151	0.0813179451023111\\
0.106557960301275	0.288560265735927	0.604881773962798\\
0.218577901052293	0.456304497840394	0.325117601107313\\
0.232680956320608	0.231371392613176	0.535947651066216\\
0.254912098483247	0.276231678564715	0.468856222952038\\
0.363506705878983	0.167563968924033	0.468929325196984\\
0.207425110510849	0.159066276013157	0.633508613475994\\
0.594100742163954	0.288595154726627	0.117304103109419\\
0.510634736336602	0.285127697430668	0.204237566232729\\
0.313740706740464	0.313049687867409	0.373209605392127\\
0.546934584618126	0.39821462464855	0.0548507907333236\\
};

\addplot3[only marks, mark=triangle*, mark options={}, mark size=4pt, color=black, fill=mypurple] table[row sep=crcr]{%
x	y	z\\
0.0527258059095908	0.113726883529871	0.833547310560538\\
0.69860062739595	0.19084781775927	0.110551554844779\\
0.211505632603685	0.654410431690122	0.134083935706193\\
0.479345263627531	0.513035401733014	0.00761933463945425\\
};

\addplot3 [color=mycolor1, dashed]
 table[row sep=crcr] {%
0	0	1\\
0	1	0\\
};

\addplot3 [color=mycolor1, dashed]
 table[row sep=crcr] {%
0	0	1\\
1	0	0\\
};

\addplot3 [color=mycolor1, dashed]
 table[row sep=crcr] {%
0	1	0\\
1	0	0\\
};

\addplot3 [color=mycolor1]
 table[row sep=crcr] {%
0.69860062739595	0.19084781775927	0.110551554844779\\
0.479345263627531	0.513035401733014	0.00761933463945425\\
};


\addplot3 [color=mycolor1]
 table[row sep=crcr] {%
0.69860062739595	0.19084781775927	0.110551554844779\\
0.0527258059095908	0.113726883529871	0.833547310560538\\
};

\addplot3 [color=mycolor1]
 table[row sep=crcr] {%
0.479345263627531	0.513035401733014	0.00761933463945425\\
0.211505632603685	0.654410431690122	0.134083935706193\\
};


\addplot3 [color=mycolor1]
 table[row sep=crcr] {%
0.211505632603685	0.654410431690122	0.134083935706193\\
0.0527258059095908	0.113726883529871	0.833547310560538\\
};

\end{axis}

\end{tikzpicture}
    }\hfil
    \subfloat[\label{fig:intvertex}]{%
%
%
%
\begin{tikzpicture}[scale=0.5]

\begin{axis}[%
width=3.541in,
height=3.583in,
at={(0.594in,0.484in)},
scale only axis,
xmin=0,
xmax=1,
tick align=outside,
ymin=0,
ymax=1,
zmin=0,
zmax=1,
view={135}{35.2643896827547},
axis background/.style={fill=white},
axis x line*=bottom,
axis y line*=left,
axis z line*=left,
xmajorgrids,
ymajorgrids,
zmajorgrids,
legend style={at={(-0.15,0.03)}, anchor=south west, legend cell align=left, align=left, draw=white!15!black, nodes={scale=1.5, transform shape}}
]
\addplot3[only marks, mark=*, mark options={}, mark size=2pt, draw=black, fill=mygreen] table[row sep=crcr]{%
x	y	z\\
0.32520138422833	0.385141964842912	0.289656650928758\\
0.465630197826768	0.0695308168477693	0.464838985325462\\
0.0703272644254425	0.723127447065707	0.206545288508851\\
0.0671279775274921	0.790312471922667	0.142559550549841\\
0.275520164796942	0.241104446209306	0.483375388993752\\
0.591018030848994	0.137358562573425	0.27162340657758\\
0.668270183468812	0.052644151377599	0.279085665153589\\
0.553583839969188	0.223387557931225	0.223028602099587\\
0.0918636955979726	0.270862392442576	0.637273911959451\\
0.17460222331209	0.159974336388151	0.665423440299758\\
0.0845324899550969	0.424817710942965	0.490649799101938\\
0.079636620163263	0.527630976571477	0.39273240326526\\
0.541310984631405	0.177203099620858	0.281485915747737\\
0.392029367779484	0.414702064471663	0.193268567748852\\
0.175676656665297	0.625076992792895	0.199246350541808\\
0.606179066187951	0.161103737409005	0.232717196403044\\
};
\addlegendentry{Data points $M(:,j)$}

\addplot3[only marks, mark=triangle*, mark options={}, mark size=4pt, color=black, fill=mypurple] table[row sep=crcr]{%
x	y	z\\
0.0666666666666667	0.8	0.133333333333333\\
0.1	0.1	0.8\\
0.7	0.05	0.25\\
};
\addlegendentry{Exterior vertices}

\addplot3[only marks, mark=square*, mark options={}, mark size=3pt, color=black, fill=myorange] table[row sep=crcr]{%
x	y	z\\
0.368421052631579	0.315789473684211	0.315789473684211\\
};
\addlegendentry{Interior vertex}

\node[right, align=right]
at (axis cs:0.15,0.126,0.924) {$W_1$};
\node[right, align=right]
at (axis cs:0.165,0.874,0.161) {$W_2$};
\node[right, align=right]
at (axis cs:0.807,0.127,0.266) {$W_3$};
\node[right, align=right]
at (axis cs:0.474,0.391,0.335) {$W_4$};
\node[right, align=right]
at (axis cs:0.318,0.314,0.568) {$M_1$};

\addplot3 [color=mycolor1, dashed]
 table[row sep=crcr] {%
0	0	1\\
0	1	0\\
};
 \addlegendentry{Unit simplex}

\addplot3 [color=mycolor1, dashed]
 table[row sep=crcr] {%
0	0	1\\
1	0	0\\
};

\addplot3 [color=mycolor1, dashed]
 table[row sep=crcr] {%
0	1	0\\
1	0	0\\
};

\addplot3 [color=mycolor1]
 table[row sep=crcr] {%
0.1	0.1	0.8\\
0.0666666666666667	0.8	0.133333333333333\\
};

\addplot3 [color=mycolor1]
 table[row sep=crcr] {%
0.1	0.1	0.8\\
0.7	0.05	0.25\\
};

\addplot3 [color=mycolor1]
 table[row sep=crcr] {%
0.1	0.1	0.8\\
0.368421052631579	0.315789473684211	0.315789473684211\\
};

\addplot3 [color=mycolor1]
 table[row sep=crcr] {%
0.0666666666666667	0.8	0.133333333333333\\
0.7	0.05	0.25\\
};

\addplot3 [color=mycolor1]
 table[row sep=crcr] {%
0.0666666666666667	0.8	0.133333333333333\\
0.368421052631579	0.315789473684211	0.315789473684211\\
};

\addplot3 [color=mycolor1]
 table[row sep=crcr] {%
0.7	0.05	0.25\\
0.368421052631579	0.315789473684211	0.315789473684211\\
};

\end{axis}

\end{tikzpicture}
    }
    \caption{On the left (a), all vertices are exterior, and SNPA is assured to identify them all. On the right (b), the data points are 2-sparse combinations of 4 points, one of which (vertex 4) is ``interior'' hence it cannot be identified with separable NMF.}
    \label{fig:examples}
\end{figure}


These difficult cases cannot be handled with separable NMF, because it assumes the data points to be linear combinations of vertices, so an ``interior vertex'' cannot be distinguished from another data point.
However, if we assume the \emph{sparsity} of the mixture matrix $H$, we may be able to identify these interior vertices.
To do so, we introduce a new model, extending the approach of SNPA using additional sparsity constraints.
We introduce in \cref{sec:ssnmf} a proper definition of this new problem, which we coin as sparse separable NMF (SSNMF). Before doing so, let use recall the literature on sparse NMF.

\subsection{Sparse NMF}

A vector or matrix is said to be \emph{sparse} when it has few non-zero entries.
Sparse NMF is one of the most popular variants of NMF, as it helps producing more interpretable factors.
In this model, we usually consider column-wise sparsity of the factor $H$, meaning that a data point is expressed as the combination of only a few atoms.
For example, in multispectral unmixing, the column-wise sparsity of $H$ means that a pixel is composed of fewer materials than the total number of materials present in the image.
When sparsity is an a priori knowledge on the structure of the data, encouraging sparsity while computing NMF is likely to reduce noise and produce better results.

Sparse NMF is usually solved by extending standard NMF algorithms with a regularization such as the $\ell_1$ penalty~\cite{hoyer2002non,kim_sparse_2007}, or constraints on some sparsity measure, like the one introduced in~\cite{hoyer2004non}.
Recently, exact $k$-sparse methods based on the $\ell_0$-``norm'' have been used for NMF, using a brute-force approach~\cite{cohen_nonnegative_2019-1}, or a dedicated branch-and-bound algorithm~\cite{nadisicexact}.
They allow the explicit definition of a maximum number (usually noted $k$) of non-zero entries per column of $H$.
These approaches leverage the fact that, in most NMF problems, the factorization rank $r$ is small, hence it is reasonable to solve the $k$-sparse NNLS subproblems exactly.

\subsection{Contributions and Outline}


In this work, we study the SSNMF model from a theoretical and a  pratical point of view. 
Our contributions can be summarized as follows:
\begin{itemize}
    \item In \cref{sec:ssnmf}, we introduce the SSNMF model. We prove that, unlike separable NMF, SSNMF is NP-complete.
    
    \item In \cref{sec:algo}, we propose an algorithm to tackle SSNMF, based on SNPA and an exact sparse NNLS solver.
    
    \item In \cref{sec:analysis}, we prove that our algorithm is correct under reasonable assumptions, in the noiseless case. 
    
    \item In \cref{sec:xp}, experiments on both synthetic and real-world data sets illustrate the relevance and efficiency of our algorithm. 
\end{itemize}


\section{Sparse Separable NMF}
\label{sec:ssnmf}

We explained in the previous section why separable NMF does not allow for the identification of ``interior vertices'', as they are nonnegative linear  combinations of other vertices.
However, if we assume a certain column-wise sparsity on the coefficient matrix $H$, they may become identifiable.
For instance, the vertex~$W_4$ of \cref{fig:intvertex} can be expressed as a combination of the three exterior vertices ($W_1$, $W_2$, and $W_3$), but not as a combination of any two of these vertices. 
Moreover, some data points cannot be explained using only pairs of exterior vertices, while they can be if we also select the interior vertex~$W_4$. 


\subsection{Problem Statement and Complexity}
\label{subsec:pbm}

Let us denote $\|x\|_0$ the number of non-zero entries of the vector $x$. 

\begin{definition}\label{def:sparsesep}
A matrix $M$ is $k$-sparse $r$-separable if there exists a subset of $r$ columns of $M$, indexed by $\mathcal{J}$, and a nonnegative matrix $H \geq 0$ with $\|H(:,j)\|_0 \leq k$ for all $j$ such that $M = M(:,\mathcal{J}) H$.  
\end{definition}
\Cref{def:sparsesep} corresponds to \cref{def:separability} with the additional constraint that $H$ has $k$-sparse columns, that is, columns with at most $k$ non-zero entries. 
A natural assumption to ensure that we can identify $W$ (that is, find the set $\mathcal{J}$), is that the columns of $W$ are not $k$-sparse combinations of any other columns of $W$; see \cref{sec:analysis} for the details.

\begin{problem}[SSNMF] 
Given a $k$-sparse $r$-separable matrix $M$, find $W = M(:,\mathcal{J})$ with $|\mathcal{J}| = r$ and a column-wise $k$-sparse matrix $H \geq 0$  such that $M = W H$. 
\end{problem}


As opposed to separable NMF, given $\mathcal{J}$, computing $H$ is not straightforward. It requires to solve the following $\ell_0$-constrained optimization problem 
\begin{equation}\label{eq:sparseproj}
    H^* = \argmin_{H \geq 0} f(M - M(:,\mathcal{J}) H) \text{ such that }  \|H(:,j)\|_0 \leq k \text{ for all } j. 
\end{equation}
Because of the combinatorial nature of the $\ell_0$-``norm'', this $k$-sparse projection is a difficult subproblem with $\binom{r}{k}$ possible solutions, which is known to be NP-hard~\cite{natarajan1995sparse}. 
In particular, a brute-force approach could tackle this problem by solving $\mathcal{O}(r^k)$ NNLS problems. 
However, this combinatorial subproblem can be solved exactly and at a reasonable cost by dedicated branch-and-bound algorithms, such as  \arbo{}~\cite{nadisicexact}, given that $r$ is sufficiently small, which is typically the case in practice.
Even when $k$ is fixed, the following result shows that no provably correct algorithm exists for solving SSNMF in polynomial time (unless P=NP):
\begin{theorem} \label{th:nphardssnmf}
SSNMF is NP-complete for any fixed $k \geq 2$. 
\end{theorem}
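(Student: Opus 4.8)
The plan is to establish both directions of NP-completeness: membership in NP, which is routine, and NP-hardness, which requires a geometric gadget reduction. For membership I would work with the decision version --- given $M$, $r$ and $k$, does there exist $\mathcal{J}$ with $|\mathcal{J}|=r$ and a column-wise $k$-sparse $H\ge 0$ with $M=M(:,\mathcal{J})H$? A certificate is the pair $(\mathcal{J},H)$, and one checks in polynomial time that $|\mathcal{J}|=r$, that $H\ge 0$, that $\|H(:,j)\|_0\le k$ for every $j$, and that $M=M(:,\mathcal{J})H$. Note that for fixed $k$ one need not even include $H$ in the certificate: by the remark preceding the theorem, the $k$-sparse projection onto a \emph{given} $M(:,\mathcal{J})$ costs $O(r^k)$ per column, so all the hardness must come from the choice of $\mathcal{J}$ among the $\binom{n}{r}$ possibilities.

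For hardness with $k=2$ I would reduce from 3-SAT. The guiding observation is that, after normalising columns to the unit simplex, a data point is $2$-sparse representable by $M(:,\mathcal{J})$ exactly when it lies on a segment joining two selected columns; a point placed at the common crossing of several such segments (with otherwise generic coordinates) is then representable \emph{iff at least one} of the corresponding pairs is entirely selected, which realises disjunctions. Concretely I would use three families of columns: \textbf{anchors}, a polynomial number of exterior vertices of $\conv(M)$ that no other column can reconstruct and that are therefore forced into every feasible $\mathcal{J}$; for each variable $x$ two \textbf{literal columns} $T_x,F_x$, each placed in the interior of a segment between two anchors so that an \emph{unselected} literal column is automatically covered; and \textbf{constraint points}, one per variable placed at the crossing of $[T_x,a]$ and $[F_x,a']$, and one per clause placed at the common crossing of the three segments joining its literals to dedicated anchors. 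Taking $r=\#\text{anchors}+n$, the anchors are forced, the residual budget $n$ together with the per-variable constraint points forces exactly one of $\{T_x,F_x\}$ per variable (a truth assignment), and the per-clause points are covered iff each clause contains a selected, hence true, literal; so the constructed $M$ is $2$-sparse $r$-separable iff the formula is satisfiable.

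To pass from $k=2$ to any fixed $k\ge 2$ I would lift the construction by appending $k-2$ fresh coordinates and $k-2$ extra \textbf{filler} columns, one supporting each new coordinate, and giving every column to be covered a positive entry in each new coordinate. Each filler is an unrepresentable vertex, hence forced into $\mathcal{J}$, and the only way to supply a new coordinate is through its own filler; thus every $k$-sparse representation must spend $k-2$ of its budget on the fillers and is left with exactly two free generators, reducing $k$-sparse feasibility to the $2$-sparse case above with $r'=r+(k-2)$.

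The crux of the argument is \textbf{rigidity}: I must exhibit explicit, polynomially bounded rational coordinates guaranteeing that each constraint point is $k$-sparse representable \emph{only} through its intended generator combination --- no spurious pair (or $k$-tuple) of columns may accidentally reconstruct it, and the literal and constraint columns must create no unintended collinearities. This is a general-position argument that has to be made quantitative, so that a small perturbation destroys every unwanted incidence while keeping the entries rational and of polynomial size. Once rigidity is in place, the two remaining directions --- that a satisfying assignment yields a genuine $2$-sparse $r$-separable factorisation, so the produced instances indeed satisfy the promise of \cref{def:sparsesep}, and that any feasible $\mathcal{J}$ decodes to a satisfying assignment --- follow from the gadget semantics, and together with NP membership they give NP-completeness for every fixed $k\ge 2$.
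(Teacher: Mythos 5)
Your overall architecture is sound, and your NP-membership argument is the same as the paper's. For hardness, however, you take a genuinely different source problem: the paper reduces from SET-COVER, not 3-SAT. Both reductions rest on the same geometric principle --- a point is $2$-sparse representable exactly when it lies on a segment between two selected columns, the vertices of $\conv(M)$ are forced into $\mathcal{J}$, and a budget count does the rest --- and your budget analysis does go through (with the extra observation that spending budget on a constraint point rather than a literal cannot help cover the clause points). Your lift to general $k$ via fillers is in fact \emph{more} careful than the paper's one-line remark about adding artificial orthogonal columns, and it is genuinely needed: in the plane any interior point is a $3$-sparse combination of three anchors, so without the fillers absorbing $k-2$ slots the instance would be trivially feasible for $k\geq 3$. (Do give positive new coordinates only to the constraint and clause points, not to the literal columns, which must remain usable as generators with vanishing new coordinates.)

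The genuine gap is exactly the one you flag as the crux and then defer: no explicit construction is exhibited, and for your gadgets this is not routine general position. Your clause gadget demands, for every clause, three \emph{prescribed concurrent} segments through a single point, with literal columns shared across many clauses, while simultaneously forbidding every unintended incidence among the $\Theta(N^2)$ segments and $\Theta(N)$ points; once the clause points and literals are placed, the dedicated anchors are determined by the concurrency requirement, so you cannot freely perturb everything to kill spurious incidences without destroying the incidences you need. Asserting that this can be done with polynomially bounded rational coordinates is precisely the statement that requires proof. The paper avoids this tension entirely: reducing from SET-COVER, it places all points in closed form on the moment curve $y=x^2$, where the needed incidences are automatic (the chord from $(0,-h_i)$ through $(b_j,b_j^2)$ re-meets the curve exactly at $(h_i/b_j,\,h_i^2/b_j^2)$), convex position of the forced columns is immediate (Lemma~\ref{lemmaconvexhullM}), and the only non-degeneracy to verify is that the $\sum_i |C_i|$ intersection points are pairwise distinct, which is a short explicit calculation with $a=1/n$ (Lemma~\ref{M3alldifferent}). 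Until you supply coordinates with the analogous incidence and non-incidence guarantees, your argument is a plan rather than a proof.
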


\begin{proof} 
The proof is given in Appendix~\ref{sec:nphard}. 
Note that the case $k = 1$ is trivial since each data point is a multiple of a column of $W$. 
\end{proof}

However, in Section \ref{sec:analysis}, we show that under a reasonable assumption, SSNMF can be solved in polynomial time when $k$ is fixed.

\subsection{Related Work} \label{sec:relwork}

To the best of our knowledge, the only work presenting an approach to tackle SSNMF is the one by Sun and Xin (2011)~\cite{sun2011underdetermined} --- and it does so only partially.
It studies the blind source separation of nonnegative data in the underdetermined case.
The problem tackled is equivalent to NMF in the case $m < r$.
The 
assumptions 
used in this work are similar to ours, that is, separability and sparsity. 
However, the setup considered is less general than SSNMF because the sparsity assumption (on each column of $H$) is limited to $k = m - 1$, while the only case considered theoretically is the case $r = m+1$ with only one interior vertex. 

The proposed algorithm first extracts the exterior vertices using the method LP-BSS from~\cite{naanaa2005blind}, and then identifies the interior vertex using a brute-force geometric method. 
More precisely, 
they select an interior point, and check whether at least two of the $m-1$ hyperplanes generated by this vertex with $m-2$ of the extracted exterior vertices contain other data points. 
If it is the case, then they conclude that the selected point is an interior vertex, otherwise they select another interior point. 
For example, when $m = 3$,  this method consists in constructing the  segments between the selected interior point and all the exterior vertices.  If two of these segments contain at least one data point, then the method stops and the selected interior point is chosen as the interior vertex. Looking at Figure~\ref{fig:intvertex}, the only interior point for which two segments joining this point and an exterior vertex   contain data points is $W_4$. 
Note that, to be guaranteed to work, 
this method requires at least two hyperplanes containing the interior vertex and $m-2$ exterior vertices to contain data points. This will not be a requirement in our method.  


\section{Proposed Algorithm: \brassens{}} 
\label{sec:algo}

In the following, we assume that the input matrix $M$ is $k$-sparse $r$-separable.
Our algorithm, called \brassens{}\footnote{It stands for \brassens{} Relies on Assumptions of Separability and Sparsity for Elegant NMF Solving.}, is presented formally in \cref{alg:ssnmf}.
\begin{algorithm}[tp]
  \caption{\brassens}\label{alg:ssnmf}

  \KwIn{A $k$-sparse-near-separable matrix ${M} \in \mathbb{R}^{m \times n}$, and the desired sparsity level $k$.}

  \KwOut{A set of $r$ indices $\mathcal{J}$, and a matrix $H  \in \mathbb{R}^{r \times n}_{+}$, such that $\|H(:,j)\|_0 \leq k$ for all $j$ and ${M} = \tilde{M}(:,\mathcal{J}) H$.}
  
  \vspace{0.2cm}
  \DontPrintSemicolon

  $\mathcal{J} = \text{SNPA}({M},\infty)$\label{alg:dosnpa}\\
  $\mathcal{J}' = \text{kSSNPA}({M},\infty,\mathcal{J})$\label{alg:dokssnpa}\\
  \ForEach{$j \in \mathcal{J}'$\label{alg:loop}}
  {
    \If{$\min\limits_{||h||_0 \leq k, h \geq 0} f( M(:,j) - M(:,\mathcal{J}' \setminus {j})  h ) > 0$\label{alg:if}}
    {
      $\mathcal{J} = \mathcal{J} \cup \{ j \}$\label{alg:addtoJ}
    }
  }
  $ H = \text{arborescent}(M,M(:,\mathcal{J}),k) $\label{alg:computeh}
\end{algorithm}

On \cref{alg:dosnpa} we apply the original SNPA to select the exterior vertices; it is computationally cheap and ensures that these vertices are properly identified. The symbol $\infty$ means that SNPA stops only when the residual error is zero. For the noisy case, we replace the condition $R \neq 0$ by $R > \delta$, where $\delta$ is a user-provided noise-tolerance threshold.

Then, we adapt SNPA to impose a $k$-sparsity constraint on $H$: 
  the projection step (\cref{alg:snpa-proj} of \cref{alg:snpa}) is replaced by a $k$-sparse projection step that imposes the columns of $H$ to be $k$-sparse by solving~\eqref{eq:sparseproj}. 
  We call \emph{kSSNPA} this modified version of SNPA. 
  Note that, if $k = r$, kSSNPA reduces to SNPA.  

On \cref{alg:dokssnpa} we apply kSSNPA to select candidate interior vertices. 
We provide it with the set $\mathcal{J}$ of exterior vertices so that they do not need to be identified again.
kSSNPA extracts columns of $M$ as long as the norm of the residual \mbox{$\| M-M(:,\mathcal{J}') H \|_F$} is larger than zero.
At this point, all vertices have been identified: the exterior vertices have been identified by SNPA, while the interior vertices have been identified by kSSNPA because we will assume that they are not $k$-sparse combinations of any other data points; see  Section~\ref{sec:analysis} for the details. 
Hence, the error will be equal to zero if and only if all vertices have been identified.
However, some selected interior points may not be interior vertices, because the selection step of kSSNPA chooses the point that is furthest away from the \emph{$k$-sparse hull} of the selected points, that is, the union of the convex hulls of the subsets of $k$ already selected points.
For example, in \cref{fig:intvertex}, if $W_1$, $W_2$, and $W_3$ are selected, the $k$-sparse hull is composed of the 3 segments  $[W_1,W_2]$, $[W_2,W_3]$, and $[W_3,W_1]$.
In this case, although only point~$W_4$ is a interior vertex, point~$M_1$ is selected before point~$W_4$, because it is located further away from the $k$-sparse hull.

On \crefrange{alg:loop}{alg:addtoJ}, we apply a postprocessing to the selected points by checking whether they are $k$-sparse combinations of other selected points; this is a $k$-sparse NNLS problem solved with \arbo{}~\cite{nadisicexact}.
If they are, then they cannot be vertices and they are discarded, such as point~$M_1$ in Figure~\ref{fig:intvertex} which belongs to the segment $[W_1,W_4]$.

Note that this ``postprocessing'' could be applied directly to the whole data set by selecting data points as the columns of $W$ if they are not $k$-sparse combinations of other data points. 
However, this is not reasonable in practice, as it is equivalent to solving $n$ times a $k$-sparse NNLS subproblem in $n-1$ variables. 
The kSSNPA step can thus be interpreted as a safe screening technique, similarly as done in~\cite{el2010safe} for example, in order to reduce the number of candidate atoms from all the columns of $M$ to a subset  $\mathcal{J}'$ of columns. In practice, we have observed that kSSNPA is very effective at identifying good candidates points; see Section~\ref{sec:xp-synth}.

\section{Analysis of \brassens{}}
\label{sec:analysis}

In this section, we first discuss the assumptions that guarantee \brassens{} to recover $W$ given the $k$-sparse $r$-separable matrix $M$, 
and then discuss the computational complexity of \brassens{}. 


\subsection{Correctness}
\label{subsec:correct}

In this section, we show that, given a $k$-sparse $r$-separable matrix $M$, the \brassens{} algorithm provably solves SSNMF, that is, it is able to recover the correct set of indices $\mathcal{J}$ such that ${W = M(:,\mathcal{J})}$, under a reasonable assumption.  

Clearly, a necessary assumption for \brassens{} to be able to solve SSNMF is that no column of $W$ is a $k$-sparse nonnegative linear combinations of other columns of $W$, otherwise kSSNPA might set that column of $W$ to zero, hence might not be able to extract it. 
\begin{assumption}\label{as:notcombiw}
No column of $W$ is a nonnegative linear combination of $k$ other columns of $W$.
\end{assumption}

Interestingly, unlike the standard separable case (that is, $k=r$), and although it is necessary in our approach with \brassens{},
Assumption~\ref{as:notcombiw} is not necessary in general to be able to uniquely recover $W$. 
Take for example the situation of \cref{fig:3intvertaligned}, with three aligned points in the interior of a triangle, so that $r=6$, $m=3$ and $k=2$. 
The middle point of these three aligned points is a 2-sparse combination of the other two, by construction.
If there are data points on each segment joining these three interior points and the exterior vertices, the only solution to SSNMF with $r=6$ is the one selecting these three aligned points. 
However, Assumption~\ref{as:notcombiw} is a reasonable assumption for SSNMF. 

\begin{figure}[t]
    \begin{minipage}{0.47\linewidth}
      \centering 
\tikzstyle{datapoint}=[shape=circle, draw=black, fill=mygreen, inner sep=0pt,minimum size=6pt]
\tikzstyle{extvertex}=[regular polygon, regular polygon sides=3, draw=black, fill=mypurple, inner sep=0pt,minimum size=8pt]
\tikzstyle{intvertex}=[regular polygon, regular polygon sides=4, draw=black, fill=myorange, inner sep=0pt,minimum size=8pt]

\begin{tikzpicture}[scale=0.25]
		\node [style=extvertex] (0) at (-1.25, 8) {};
		\node [style=extvertex] (1) at (-7.25, -2.75) {};
		\node [style=extvertex] (2) at (5.5, -3.5) {};
		\node [style=intvertex, label=above:{$W_6$}] (3) at (1.75, 0.75) {};
		\node [style=intvertex, label=above:{$W_4$}] (4) at (-4, 0.75) {};
		\node [style=intvertex, label=above:{$W_5$}] (5) at (-1, 0.75) {};
		\draw (1) --  (0);
		\draw (0) --  (2);
		\draw (2) --  (1);
		\draw (1) --  (3);
		\draw (3) --  (0);
		\draw (3) --  (4);
		\draw (4) --  (2);
		\draw (1) --  (4);
		\draw (4) --  (0);
		\draw (3) --  (2);
		\draw (5) --  (1);
		\draw (5) --  (0);
		\draw (5) --  (2);
		\node [style=intvertex] (5) at (-1, 0.75) {};
\end{tikzpicture}
      \caption{There are three interior vertices. One of them ($W_5$) is a combination of the others two.}\label{fig:3intvertaligned}
   \end{minipage}\hfill
    \centering
    \begin{minipage}{0.47\linewidth}
      \centering 
\tikzstyle{datapoint}=[shape=circle, draw=black, fill=mygreen, inner sep=0pt,minimum size=6pt]
\tikzstyle{extvertex}=[regular polygon, regular polygon sides=3, draw=black, fill=mypurple, inner sep=0pt,minimum size=8pt]
\tikzstyle{intvertex}=[regular polygon, regular polygon sides=4, draw=black, fill=myorange, inner sep=0pt,minimum size=8pt]

\begin{tikzpicture}[scale=0.25]
		\node [style=extvertex] (0) at (-1.25, 8) {};
		\node [style=extvertex] (1) at (-7.25, -2.75) {};
		\node [style=extvertex] (2) at (5.5, -3.5) {};
		\node [style=intvertex] (3) at (-2.5, 3.25) {};
		\node [style=intvertex, label=below:{$W_4$}] (4) at (-0.25, -1) {};
		\node [style=datapoint, label={$M_1$}] (5) at (2.5, -1) {};
		\node [style=datapoint, label={[label distance=1pt]150:$M_2$}] (6) at (-5.75, -1) {};
		\draw (1) -- (0);
		\draw (0) -- (2);
		\draw (2) -- (1);
		\draw (1) -- (3);
		\draw (3) -- (0);
		\draw (3) -- (4);
		\draw (4) -- (2);
		\draw (1) -- (4);
		\draw (4) -- (0);
		\draw (3) -- (2);
		\draw [dotted] (5) -- (6);
		\node [style=datapoint] at (2.5, -1) {};
		\node [style=datapoint] at (-5.75, -1) {};
\end{tikzpicture}
    \caption{There are two interior vertices. One of them ($W_4$) is a 2-sparse combination of data points ($M_1$ and $M_2$).}\label{fig:badex}
   \end{minipage}
\end{figure}

Unfortunately, Assumption~\ref{as:notcombiw} is not sufficient for \brassens{} to provably recover $W$. In fact, we need the following stronger assumption. 
\begin{assumption}\label{as:notcombim}
No column of $W$ is a nonnegative linear combination of $k$ other columns of $M$.
\end{assumption}
This assumption guarantees that a situation such as the one shown on  Figure~\ref{fig:badex} where one of the columns of $W$ is a 2-sparse combination of two data points is not possible. In fact, in that case, if \brassens{} picks these two data points before the interior vertex $W_4$ in between them, it will not be able to identify $W_4$ as it is set to zero within the projection step of kSSNPA. 

Interestingly, in the standard separable case, that is, $k=r$, the two assumptions above coincide; this is the condition under which SNPA is guaranteed to work. 
Although Assumption~\ref{as:notcombim} may appear much stronger than Assumption~\ref{as:notcombiw}, they are actually generically equivalent given that the entries of the columns of $H$ are generated randomly (that is, non-zero entries are picked at random and follow some {continuous distribution}). 
For instance, for $m=3$ and $k=2$, it means that no vertex is on a segment joining two data points. If the data points are generated randomly on the segments generated by any two columns of $W$, the probability for the segment defined by two such data points to contain a column of $W$ is zero. In fact, segments define a set of measure zero in the unit simplex.

We can now provide a recovery result for \brassens{}.
\begin{theorem} \label{th:recovery}
Let $M = W H$ with $W = M(:,\mathcal{J})$ be a $k$-sparse $r$-separable matrix so that $|\mathcal{J}| = r$; see \cref{def:sparsesep}. 
We have that 
\begin{itemize}
    \item If $W$ satisfies \cref{as:notcombim}, 
    then the factor $W$ with $r$ columns in SSNMF is unique (up to permutation and scaling)  and 
    \brassens{} recovers it.   
    
    \item If $W$ satisfies \cref{as:notcombiw}, the entries of $H$ are generated at random (more precisely, the position of the non-zero entries are picked at random, while their values follows a continuous ditribution) and $k < \rank(M)$, then,  with probability one,  the factor $W$ with $r$ columns in SSNMF is unique (up to permutation and scaling)  and 
    \brassens{} recovers it.     
\end{itemize}
\end{theorem}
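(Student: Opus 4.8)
The plan is to prove the two bullets in turn, reducing the second to the first by a genericity argument. For the first bullet I would follow the three stages of \brassens{} and show each does what is intended, with \cref{as:notcombim} as the main tool. \emph{Stage 1 (\cref{alg:dosnpa}).} After the standard column scaling that gives $\|H(:,j)\|_1\le1$, the only columns of $W$ that are vertices of the convex hull of the origin and the columns of $M$ are the exterior vertices; by the recovery guarantee of SNPA~\cite{gillis_successive_2014}, running SNPA to zero residual returns exactly this set, since once all exterior vertices are selected every remaining column of $M$ lies in their convex hull and the residual vanishes. \emph{Stage 2 (\cref{alg:dokssnpa}).} I would establish the invariant that kSSNPA reaches zero residual if and only if every column of $W$ has been selected. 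The ``if'' direction is immediate, as each column of $M$ is a $k$-sparse nonnegative combination of the columns of $W$. For the converse, if the residual is zero but some $W(:,i)$ is unselected, then $W(:,i)$ is a $k$-sparse nonnegative combination of selected columns of $M$, all distinct from it, contradicting \cref{as:notcombim}; hence at termination $\mathcal{J}'$ contains all columns of $W$, plus possibly some non-vertex data points.

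\emph{Stage 3 (\crefrange{alg:loop}{alg:addtoJ}).} I would evaluate the test on \cref{alg:if} for each candidate: a column of $W$ yields a strictly positive optimum by \cref{as:notcombim} and is kept, while a selected non-vertex $M(:,j)=Wh$ has its $k$-sparse support among the columns of $W$, all lying in $\mathcal{J}'\setminus\{j\}$, so its optimum is zero and it is discarded. Thus the final $\mathcal{J}$ indexes exactly $W$ and \cref{alg:computeh} returns a valid $H$. The same reasoning yields uniqueness: any $\tilde W=M(:,\tilde{\mathcal{J}})$ with $|\tilde{\mathcal{J}}|=r$ admitting a column-wise $k$-sparse $\tilde H\ge0$ with $M=\tilde W\tilde H$ must contain every column of $W$ (otherwise that column would be a $k$-sparse combination of $k$ other columns of $M$), so $\tilde W=W$ up to permutation and scaling.

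For the second bullet I would reduce to the first by proving that, under \cref{as:notcombiw} with randomly generated $H$ and $k<\rank(M)$, \cref{as:notcombim} holds with probability one. The failure event is a finite union, over columns $W(:,i)$ and $k$-subsets $T$ of the other columns of $M$, of the events $\{W(:,i)\in\operatorname{cone}(M(:,T))\}\subseteq\{W(:,i)\in\operatorname{span}(M(:,T))\}$. If all members of $T$ are columns of $W$, the event is forbidden by \cref{as:notcombiw}. If at least one member is a data point $Wh_j$ with random support and continuous nonzero values, the span membership is a vanishing-minor polynomial equation in those random entries; since $k<\rank(M)=\rank(W)$, a generic choice makes $W(:,i)$ together with the $k$ columns linearly independent, so the polynomial is not identically zero and its zero set has measure zero. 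A finite union of null sets is null, so almost surely \cref{as:notcombim} holds and the first bullet applies.

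The step I expect to be the main obstacle is this genericity argument: one must check, for every support pattern of the random columns in $T$, that the relevant minor is not identically zero, which is exactly where $k<\rank(M)$ is needed to guarantee that $k$ columns cannot generically span the fixed column $W(:,i)$. By comparison, the stage-2 invariant is routine but should be stated with care to accommodate the non-vertex points that kSSNPA may select before the true interior vertices, as noted in \cref{sec:algo}.
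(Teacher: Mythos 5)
Your treatment of the first bullet is essentially the paper's own argument, just spelled out in more detail: kSSNPA terminates only at zero residual, which under \cref{as:notcombim} forces every column of $W$ into $\mathcal{J}'$; the postprocessing then keeps exactly the columns that are not $k$-sparse nonnegative combinations of the other selected columns; and uniqueness follows because any feasible $\tilde{\mathcal{J}}$ with $|\tilde{\mathcal{J}}|=r$ must contain all of $\mathcal{J}$. That part is correct.

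The gap is in your genericity argument for the second bullet. You replace the failure event $\{W(:,i)\in\operatorname{cone}(M(:,T))\}$ by $\{W(:,i)\in\operatorname{span}(M(:,T))\}$ and claim that, for every support pattern, the relevant minor is not identically zero because $k<\rank(M)$. This is false: the span event can occur with probability one conditionally on a support pattern that itself has positive probability. Take $k=2$ and let $T$ consist of two data points $Wh_1$ and $Wh_2$ with $\operatorname{supp}(h_1)=\operatorname{supp}(h_2)=\{i,\ell\}$, i.e., both drawn on the open segment between $W(:,i)$ and $W(:,\ell)$. Then every column of $[W(:,i),\,M(:,T)]$ lies in $\operatorname{span}(W(:,i),W(:,\ell))$, so all order-$3$ minors vanish identically in the continuous values, and $k<\rank(M)$ is of no help: the obstruction lives in $\operatorname{span}(W(:,\{i,\ell\}))$, whose dimension is $k$, not in $\col(M)$. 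What rescues the statement in exactly this configuration is the nonnegativity you discarded: writing $W(:,i)=W(\alpha_1 h_1+\alpha_2 h_2)$ forces $\alpha_1 (h_1)_\ell+\alpha_2 (h_2)_\ell=0$ with $(h_1)_\ell,(h_2)_\ell>0$, hence $\alpha=0$, a contradiction. So the measure-zero argument must be run on cones (equivalently on the segments and low-dimensional convex pieces of the $k$-sparse hull, which is how the paper frames it in the discussion preceding the theorem), with the degenerate support patterns handled through \cref{as:notcombiw} and the sign constraints rather than through a rank count. As written, your reduction to vanishing minors does not close the second bullet.
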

\begin{proof} 
Uniqueness of $W$ in SSNMF under \cref{as:notcombim} is straightforward: since the columns of $W$ are not $k$-sparse combinations of other columns of $M$, they have to be selected in the index set $\mathcal{J}$. Otherwise, since columns of $W$ are among the columns of $M$, it would not possible to reconstruct $M$ exactly using $k$-sparse combinations of $M(:,\mathcal{J})$. Then, since all other columns are $k$-sparse combinations of the $r$ columns of $W$ (by assumption), no other columns needs to be added to $\mathcal{J}$ which satisfies $|\mathcal{J}|=r$.  

Let us show that, 
under \cref{as:notcombim}, \brassens{} recovers the correct set of indices $\mathcal{J}$. 
kSSNPA can only stop when all columns of $W$ have been identified. 
In fact, kSSNPA stops when the reconstruction error is zero, 
while, under \cref{as:notcombim}, this is possible only when all columns of $W$ are selected (for the same reason as above). 
Then, the postprocessing will be able to identify, among all selected columns, the columns of $W$, because they will be the only ones that are not $k$-sparse combinations of other selected columns. 

The second part of the proof follows from standard probabilistic results: since $k < \rank(M)$, the combination of $k$ data points generates a subspace of dimension smaller than that of $\col(M)$. Hence, generating data points at random is equivalent to generating such subspaces at random. Since these subspaces form a space of measure zero in $\col(M)$, the probability for these subspaces to contain a column of $W$ is zero, which implies that \cref{as:notcombim} is satisfied with probability one. 
\end{proof} 

\subsection{Computational Cost}

Let us derive an upper bound on the computational cost of \brassens{}. 
First, recall that solving an NNLS problem up to any precision can be done in polynomial time. For simplicity and because we focus on the non-polynomial part of \brassens{}, we denote $\bar{\mathcal{O}}(1)$ the complexity of solving an NNLS problem.  
In the worst case, kSSNPA will extract all columns of $M$. In each of the $n$ iterations of kSSNPA, the problem~\eqref{eq:sparseproj} needs to be solved. When $|\mathcal{J}|=\mathcal{O}(n)$, this requires to solve $n$ times (one for each column of $M$) a $k$-sparse least squares problem in $|\mathcal{J}|=\mathcal{O}(n)$ variables. The latter requires in the worst case $\mathcal{O}(n^k)$ operations by trying all possible index sets; see the discussion after~\eqref{eq:sparseproj}. In total, kSSNPA will therefore run in the worst case in time $\bar{\mathcal{O}}(n^{k+2})$. 

Therefore, when $k$ is fixed (meaning that $k$ is considered as a fixed constant) and under \cref{as:notcombim}, \brassens{} can solve SSNMF in polynomial time. 
Note that this is not in contradiction with our NP-completeness results when $k$ is fixed (Theorem~\ref{th:nphardssnmf}) because our NP-completeness proof does not rely on \cref{as:notcombim}. 

In summary, to make SSNMF hard, we need either $k$ to be part of the input, or the columns of $W$ to be themselves $k$-sparse combinations of other columns of $W$. 


\section{Experiments}
\label{sec:xp}

The code and data are available online\footnote{\url{https://gitlab.com/nnadisic/ssnmf}}.
All experiments have been performed on a personal computer with an i5
processor, with a clock frequency of 2.30GHz.
All algorithms are single-threaded.
All are implemented in Matlab, except the sparse NNLS solver \arbo{}, which is implemented in C++ with a Matlab MEX interface. 

As far as we know, no algorithm other than \brassens{} can tackle SSNMF with more than one interior point (see Section~\ref{sec:relwork}) hence comparisons with existing works are unfortunately limited.
For example, separable NMF algorithms can only identify the exterior vertices; see Section~\ref{sec:intro}.
However, we will compare \brassens{} to SNPA on a real multispectral image in Section~\ref{sec:xp-real}, to show the advantages of the SSNMF model over separable NMF.
In Section~\ref{sec:xp-synth}, we illustrate the correctness and efficiency of \brassens{} on synthetic data sets.

\subsection{Synthetic Data Sets}\label{sec:xp-synth}

In this section, we illustrate the behaviour of \brassens{} in different experimental setups. 
The generation of a synthetic data set is done as follows: for a given number of dimensions $m$, number of vertices $r$, number of data points $n$, and data sparsity $k$, we generate matrices $W \in \mathbb{R}_+^{m \times r}$ such that the last $r-m$ columns of $W$ are linear combinations of the first $m$ columns, and $H \in \mathbb{R}_+^{r \times n}$ such that $H = [ I_r , H' ]$ and $\|H(:,j)\|_0 \leq k$ for all $j$.
We use the uniform distribution in the interval [0,1] to generate random numbers (columns of $W$ and columns of $H$), and then normalize the columns of $W$ and $H$ to have unit $\ell_1$ norm.
We then compute $M = W H$.
This way, the matrix $M$ is $k$-sparse $r$-separable, with $r$ vertices, of which $r-m$ are interior vertices (in fact, the first $m$ columns of $W$ are linearly independent with probability one as they are generated randomly). 
We then run \brassens{} on $M$, with the parameter $k$, and no noise-tolerance.
For a given setup, we perform 30 rounds of generation and solving, and we measure the median of the running time and the median of the number of candidates extracted by kSSNPA.
This number of candidates corresponds to $| \mathcal{J}' |$ in \cref{alg:ssnmf}, that is, the number of interior points selected by kSSNPA as potential interior vertices.
Note that a larger number of candidates only results in an increased computation time, and does not change the output of the algorithm which is guaranteed to extract all vertices (\cref{th:recovery}).

\cref{fig:xpnvar} shows the behaviour of \brassens{} when $n$ varies, with fixed $m=3$, $k=2$, and $r=5$.
To the best of our knowledge, this case is not handled by any other algorithm in the literature.
Both the number of candidates and the run time grow slower than linear.
The irregularities in the plot are due to the high variance between runs.
Indeed, if vertices are generated in a way that some segments between vertices are very close to each other, \brassens{} typically selects more candidates before identifying all columns of $W$.  
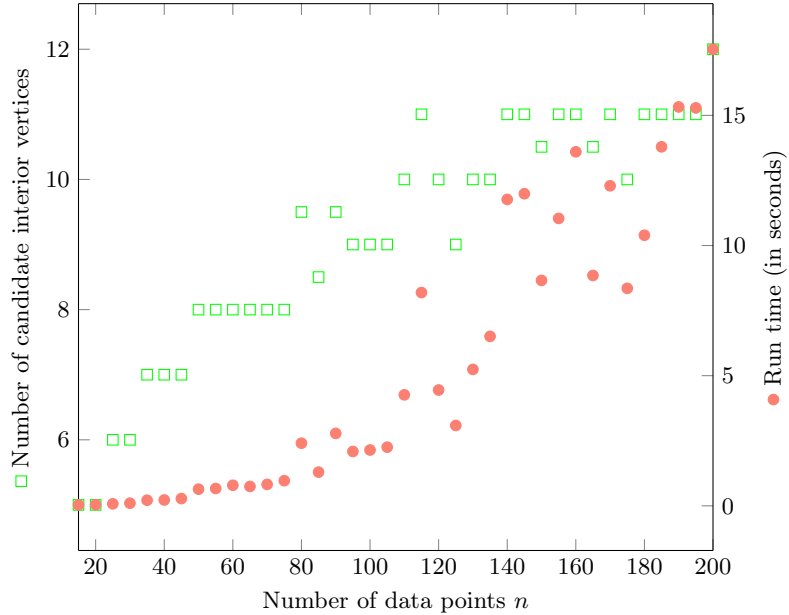
\begin{figure}[ht!] 
    \centering
    \begin{tikzpicture}
    \begin{axis}[
      scale only axis,
      xmin=15,
      xmax=200,
      axis y line*=left, 
      xlabel={Number of data points $n$},
      ylabel={\ref{cand} Number of candidate interior vertices}]
      \addplot[only marks, color=green, mark=square] table [x index=0,y index=1] {img/xp-nvar.txt}; \label{cand}
    \end{axis}

    \begin{axis}[
      scale only axis,
      xmin=15,
      xmax=200,
      axis y line*=right,
      axis x line=none,
      ylabel={\ref{runtime} Run time (in seconds)}]
       \addplot[only marks, color=myorange] table [x index=0,y index=2, color=green] {img/xp-nvar.txt}; \label{runtime}
    \end{axis}
\end{tikzpicture}
    \caption{Results for \brassens{} on synthetic data sets for different values of $n$, with fixed $m=3$, $k=2$, and $r=5$ with 3 exterior and 2 interior vertices. The values showed are the medians over 30 experiments.}\label{fig:xpnvar}
\end{figure}

In \cref{tab:synth2} we compare the performance of \brassens{} for several sets of parameters. 
The number of candidates grows relatively slowly as the dimensions $(m,n)$ of the problem increase, showing the efficiency of the screening performed by kSSNPA. 
However, the run time grows rather fast when the dimensions $(m,n)$ grow.  
This is because, not only the number of NNLS subproblems to solve increase, but also their size. 
\setlength{\tabcolsep}{6pt}
\begin{table}[ht]
\centering
\caption{Results for \brassens{} on synthetic data sets (median over 30 experiments).} 
\label{tab:synth2}
\begin{tabular}{c|c|c|c|c|c}
m & n  & r  & k & Number of candidates & Run time in seconds \\ \hline
3 & 25 & 5  & 2 & 5.5                  & 0.26                \\
4 & 30 & 6  & 3 & 8.5                  & 3.30                \\
5 & 35 & 7  & 4 & 9.5                  & 38.71               \\
6 & 40 & 8  & 5 & 13                   & 395.88              \\
\end{tabular}
\end{table}
In all cases, as guaranteed by Theorem~\ref{th:recovery}, \brassens{} was able to correctly identify the columns of $W$. Again, as far as we know, no existing algorithms in the literature can perform this task. 

To summarize, our synthetic experiments show the efficiency of the screening done by kSSNPA, and the capacity of \brassens{} to handle medium-scale data sets.

\subsection{Blind Multispectral Unmixing}\label{sec:xp-real}

A multispectral image is an image composed of various wavelength ranges, called \emph{spectral bands}, where every pixel is described by its \emph{spectral signature}.
This signature is a vector representing the amount of energy measured for this pixel in every considered spectral band.
Multispectral images usually have a small number of bands (between 3 and 15).
These bands can be included or not in the spectrum of visible light.
In the NMF model, if the columns of $M$ are the $n$ pixels of the image, then its rows represent the $m$ spectral bands.

The unmixing of a multispectral image consists in identifying the different materials present in that image.
When the spectral signatures of the materials present in the image are unknown, it is referred to as \emph{blind unmixing}.
The use of NMF for blind unmixing of multispectral images relies on the linear mixing model, that is, the assumption that the spectral signature of a pixel is the linear combination of the spectral signatures of the material present in this pixel.
This corresponds exactly to the NMF model, which is therefore able to identify both the materials present in the image ($W$) and the proportions/abundances of materials present in every pixel ($H$); see~\cite{bioucas2012hyperspectral,ma2013signal} for more details. 

Let us apply \brassens{} to the unmixing of the well-known Urban satellite image~\cite{fyzhu2014hyperspectraldata}, composed of 
$309 \times 309$ pixels. The original cleaned image has 162 bands, but we only keep 3 bands, namely the bands 2, 80, and 133 -- these were obtained by selecting different bands with SPA applied on $M^T$ -- to obtain a data set of size $3 \times \num{94249}$. The question is: can we still recover materials by using only 3 bands? (The reason for this choice is that this data set is well known and the ground truth is available, which is not the case of most multispectral images with only 3 bands.)  
We first normalize all columns of $M$ so that they sum to one.
Then, we run \brassens{} with a sparsity constraint $k = 2$ (this means that we assume that a pixel can be composed of at most 2 materials, which is reasonable for this relatively high resolution image) and a noise-tolerance threshold of 4\%; this means that we stop SNPA and kSSNPA when 
$\|M - M(:,\mathcal{J}) H\|_F \leq 0.04 \|M\|_F$. 
\brassens{} extracts 5 columns of the input matrix.  
For comparison, we run SNPA with $r = 5$. 
Note that this setup corresponds to underdetermined blind unmixing, because $m = 3 < r = 5$. 
It would not be possible to tackle this problem using standard NMF algorithms (that would return a trivial solution such as $M = I_3 M$). 
It can be solved with SNPA, but SNPA cannot identify interior vertices. 

SNPA extracts the 5 vertices in \num{3.8} seconds.
\brassens{} extracts 5 vertices, including one interior vertex, in \num{33} seconds. 
The resulting abundance maps are showed in \cref{fig:mapbrassens}.
They correspond to the reshaped rows of $H$, hence they show which pixel contains which extracted material (they are more easily interpretable than the spectral signatures contained in the columns of $W$).  
The materials they contain are given in \cref{tab:unmixing}, using the ground truth from~\cite{zhu2017hyperspectral}.
We see that \brassens{} produces a better solution, as the materials present in the image are better separated: the first three abundance maps of \brassens{} are sparser and correspond to well-defined materials. The last two abundances maps of SNPA and of \brassens{} are similar but extracted in a different order. 
The running time of \brassens{} is reasonable, although ten times higher than SNPA.

\begin{figure}[ht!]
    \centering
    \includegraphics[width=\textwidth]{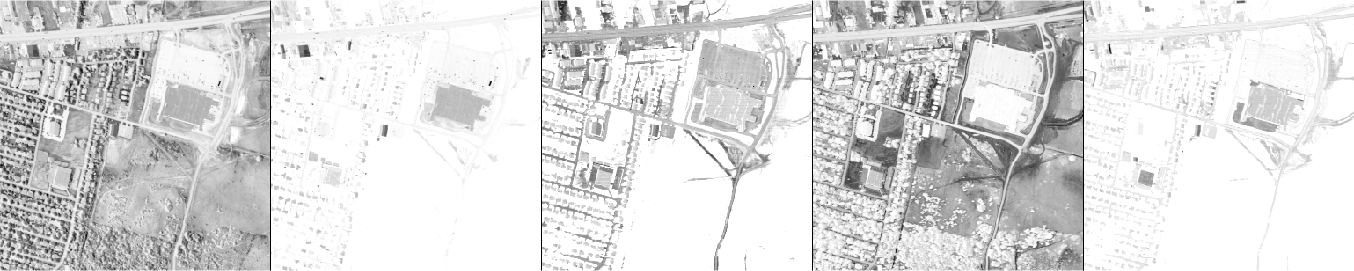}
    \vspace{0.1cm}\\ 
    \includegraphics[width=\textwidth]{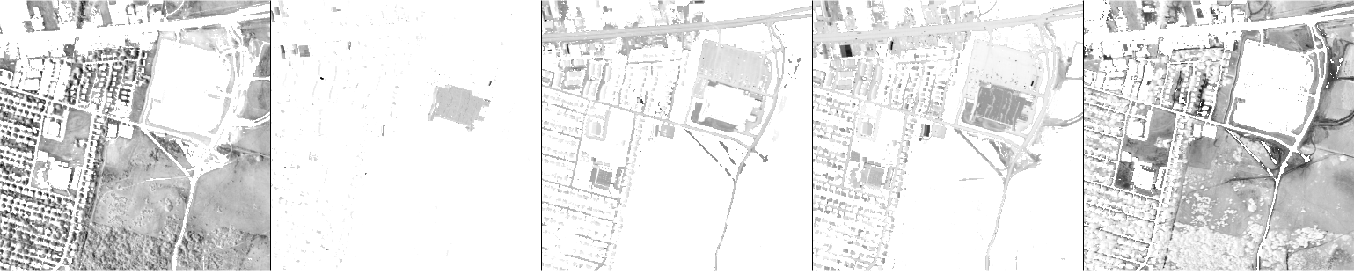} 
    \caption{Abundances maps of materials (that is, reshaped rows of $H$) extracted by SNPA (top) and by \brassens{} (bottom) in the Urban image with only 3 spectral bands.}\label{fig:mapbrassens}
\end{figure}

\begin{table}[ht!] 
\centering
\caption{Interpretation of the unimixing results from Figure~\ref{fig:mapbrassens}.}\label{tab:unmixing}
\begin{tabular}{c|c|c}
Image & Materials extracted by SNPA & Materials extracted by \brassens{} \\ \hline
1     & Grass + trees + roof tops   & Grass + trees                   \\
2     & Roof tops 1                 & Roof tops 1                     \\
3     & Dirt + road + roof tops     & Road                            \\
4     & Dirt + grass                & Roof tops 1 and 2 + road        \\
5     & Roof tops 1 + dirt + road   & Dirt + grass                   
\end{tabular}
\end{table}
\section{Conclusion}
\label{sec:conclu}

In this paper, we introduced SSNMF, a new variant of the NMF model combining the assumptions of separability and sparsity. 
We presented \brassens{}, an algorithm able to solve exactly SSNMF, based on SNPA and an exact sparse NNLS solver.
We showed its efficiency for various setups and in the successful unmixing of a multispectral image. 
The present work provides a new way to perform underdetermined blind source separation, under mild hypothesis, and a new way to regularize NMF.
It makes NMF identifiable even when atoms of $W$ are nonnegative linear combinations of other atoms (as long as these combinations have sufficiently many non-zero coefficients).   
Further work includes the theoretical analysis of the proposed model and algorithm in the presence of noise.

\subsubsection*{Acknowledgments.}

The authors are grateful to the reviewers, whose insightful comments helped improve the paper. 
NN and NG acknowledge the support by the European Research Council (ERC starting grant No 679515), and by the Fonds de la Recherche Scientifique - FNRS and the Fonds Wetenschappelijk Onderzoek - Vlanderen (FWO) under EOS project O005318F-RG47. 
\appendix

\section{Proof of Theorem~1: NP-Completeness of SSNMF} \label{sec:nphard}

The main purpose of this material is to provide the proof of Theorem~1.
More precisely, we prove the NP-completeness of SSNMF with $k=2$, which we denote 2-SSNMF.
The decision version of this problem is formally defined as follows.
\begin{problem}2-SSNMF \label{sparseNMF}

  \noindent
  Given: a natural number $r>0$ and a $2$-sparse $r$-separable matrix $M$
  
  \noindent
  Question: find a dictionary matrix $W = M(:,\mathcal{J})$ with $|\mathcal{J}| \leq r$ and a column-wise $2$-sparse matrix $H \geq 0$  such that $M = W H$. 

\end{problem}
NP-completeness of SSNMF for any $2 \leq k \leq r$ follows directly as it would allow to solve 2-SSNMF by simply adding artificial columns of $W$ (for example orthogonal to the ones used in the 2-sparse decomposition). 
In order to prove the NP-hardness of 2-SSNMF, we first demonstrate a polynomial time reduction from the well known NP-complete problem \textit{SET-COVER} (see Garey and Johnson (2002)\footnote{Garey, M.R., Johnson, D.S.: Computers and intractability, vol. 29 (2002)}) to 2-SSNMF.

\begin{problem}SET-COVER \label{setcover}

  \noindent
  Given: A finite set $S=\{1,...,n\}$, a collection $C=\{C_1,...,C_m\}$ of subsets of $S$ and a positive integer $K\leq m$.

  \noindent
  Question: Does $C'\subseteq C$ exist with $|C'|\leq K$ such that every element of $S$ belongs to at least one member of $C'$.
\end{problem}

From an instance $(S,C,K)$ of SET-COVER, let us construct an instance $(M,r)$ of 2-SSNMF in polynomial time.

\begin{itemize}
\item {The natural number $r>0$} is defined as
  {
    \[
      r = \sum_{i=1}^m|C_i| + 2 + K.
    \]
  }
\item {The matrix $M$} is the concatenation of three matrices $M_1$, $M_2$, $M_3$ such that $M=[M_1,M_2,M_3]$.
    \begin{itemize}
        \item For each subset $C_i$ of Problem \ref{setcover} with $i=1,...,m$, we have the data point $M_1(:,i)$ defined as follows:
        \[
        M_1(:,i) = \begin{pmatrix} 0 & -h_i\end{pmatrix}^T
        \]
        with $h_i = \frac{i}{m+1}$.
        Hence, $M_1$ is a $2$-by-$m$ matrix.
        \item For each element $j=1,...,n$ of the ground set $S$, we have the data point $M_2(:,j)$ defined as follows:
        \[
        M_2(:,j) = \begin{pmatrix} b_j & b_j^2\end{pmatrix}^T
        \]
        with $b_j = \frac{1}{m+1+aj}$ and $a = \frac{1}{n}$.
        These points belong to the curve $y=x^2$.
        $M_2$ is a $2$-by-$n$ matrix.
        \item When the $j$th element of the ground set $S$ is a member of the $i$th subset $C_i$, we add a data point in $M_3$ as follows:
        \[
        M_3(:,l) = \begin{pmatrix} \frac{h_i}{b_j} &\frac{h_i^2}{b_j^2}\end{pmatrix}^T
        \]
        with $h_i$ and $b_j$ as previously defined.
        Moreover, we add two more columns to $M_3$ for the data points $(0,0)$ and $(0,-1)$.
        Hence, $M_3$ is a $2$-by-$(2+\sum_{i=1}^m|C_i|)$ matrix.
        Note that the intersection of the curve $y=x^2$ with the linear equation connecting $(0,-h_i)$ and $(b_j,b_j^2)$ is precisely the point $(\frac{h_i}{b_j},\frac{h_i^2}{b_j^2})$.
        We show in Lemma \ref{M3alldifferent} that all these points never overlap.
        It implies that a straight line between the $i$th point of $M_1$ and a point in $M_3$ is passing through the $jth$ point of $M_2$ if and only if $j$ is in the subset $C_i$.
    \end{itemize}

\end{itemize}

\noindent
\begin{lemma} \label{M3alldifferent}
All the columns of $M_3$ are different.
\end{lemma}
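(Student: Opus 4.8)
The plan is to exploit the geometry of the construction: every column of $M_3$ arising from a pair $(i,j)$ with $j \in C_i$ has the form
\[
  M_3(:,l) = \left( \frac{h_i}{b_j}, \; \frac{h_i^2}{b_j^2} \right)^T = \left( x_{ij}, \; x_{ij}^2 \right)^T, \qquad x_{ij} := \frac{h_i}{b_j},
\]
so it lies on the parabola $y = x^2$ and is completely determined by its first coordinate $x_{ij}$. Hence showing that these columns are pairwise distinct reduces to proving that the map $(i,j) \mapsto x_{ij}$ is injective on $\{1,\dots,m\} \times \{1,\dots,n\}$. The two remaining columns $(0,0)^T$ and $(0,-1)^T$ will be treated separately at the end.

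First I would compute $x_{ij}$ in closed form. Substituting $h_i = \frac{i}{m+1}$, $b_j = \frac{1}{m+1+aj}$, and $a = \frac{1}{n}$ yields
\[
  x_{ij} = \frac{i}{m+1}\,(m+1+aj) = i + \frac{ij}{n(m+1)}.
\]
The key observation is that the second term is a genuine fraction lying strictly between $0$ and $1$: since $1 \le i \le m$ and $1 \le j \le n$ (and $m \ge 1$),
\[
  0 < \frac{ij}{n(m+1)} \le \frac{mn}{n(m+1)} = \frac{m}{m+1} < 1.
\]
Therefore $x_{ij} \in (i, i+1)$ and $\lfloor x_{ij} \rfloor = i$, so $i$ is recovered from $x_{ij}$; and then $j = \frac{n(m+1)}{i}(x_{ij} - i)$ recovers $j$, the division being legitimate as $i \ge 1$. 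This proves injectivity, hence the parabola columns of $M_3$ are pairwise distinct.

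Finally I would dispose of the two extra columns using first coordinates: each parabola column satisfies $x_{ij} > i \ge 1 > 0$, whereas both $(0,0)^T$ and $(0,-1)^T$ have first coordinate $0$, so neither can coincide with a parabola column; and these two columns differ from one another in their second coordinate. I do not anticipate a genuine obstacle here — the whole argument rests on the elementary estimate $0 < \frac{ij}{n(m+1)} < 1$ that isolates the integer part $i$. The only point demanding care is to use the exact index ranges $1 \le i \le m$, $1 \le j \le n$ fixed by the construction (together with $m \ge 1$), since it is precisely these bounds that make $\frac{m}{m+1} < 1$ and thus keep the fractional part below $1$.
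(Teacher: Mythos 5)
Your proof is correct, and it takes a genuinely different (and arguably cleaner) route than the paper's. The paper argues by contradiction: assuming $\frac{h_i}{b_j}=\frac{h_{i'}}{b_{j'}}$ with $i\neq i'$ and $j\neq j'$, it rewrites the equality as $\frac{i}{i'}=\frac{m+1+aj'}{m+1+aj}$, confines the right-hand side to the open interval $\bigl(1-\frac{1}{m+1},\,1+\frac{1}{m+1}\bigr)$, and notes that no ratio $\frac{i}{i'}$ of distinct integers can enter that interval. You instead expand $x_{ij}=\frac{h_i}{b_j}=i+\frac{ij}{n(m+1)}$ and observe that the perturbation term lies strictly in $(0,1)$, so that $i=\lfloor x_{ij}\rfloor$ and then $j$ are both recoverable from $x_{ij}$; this is a direct injectivity (decoding) argument rather than a proof by contradiction. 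Both arguments hinge on the same quantitative fact --- the choice $a=\frac{1}{n}$ keeps the deviation below the critical threshold $\frac{m}{m+1}$, equivalently keeps the ratio within $\frac{1}{m+1}$ of $1$ --- so the content is the same, but your version buys two small things. First, it is automatically exhaustive: the paper's case split only treats $i\neq i'$ and $j\neq j'$, leaving the (trivial) mixed cases to the reader, whereas injectivity of $(i,j)\mapsto x_{ij}$ covers everything at once. Second, you explicitly check that the two extra columns $(0,0)^T$ and $(0,-1)^T$ differ from the parabola points (first coordinate $0$ versus $x_{ij}>1$) and from each other, a part of the lemma's claim that the paper's proof leaves implicit.
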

\begin{proof}

\noindent
Suppose it is not the case and that for $(i,j)$ and $(i',j')$ with $i\neq i'$ and $j\neq j'$, we have $\frac{h_i}{b_j}=\frac{h_{i'}}{b_{j'}}$, which means that, after rearrangement
\begin{equation}
  \frac{i}{i'}=\frac{m+1+aj'}{m+1+aj}. \label{equalityneververified}
\end{equation}

\noindent
For $j,j'=1,...,n$, the right-hand side of \eqref{equalityneververified} varies as follows:
\[
  1 - a\left(\frac{n-1}{m+1+an}\right) \leq \frac{m+1+aj}{m+1+aj'} \leq 1 + a\left(\frac{n-1}{m+1+a}\right),
\]
and when $a=\frac{1}{n}$, we have $a\left(\frac{n-1}{m+1+an}\right) < a\left(\frac{n-1}{m+1+a}\right)<\frac{1}{m+1}$, which means that the variation of the right-hand side around $1$ is as follows
\[
  1 - \frac{1}{m+1} < \frac{m+1+aj}{m+1+aj'} < 1 + \frac{1}{m+1}.
\]

\noindent
For $i,i'=1,...,m+1$, the closest value to $1$ of $\frac{i}{i'}$ is $\frac{m}{m+1}$, that is $1-\frac{1}{m+1}$.
Therefore, the choice of $a=\frac{1}{n}$ prevents the right-hand side  of (\ref{equalityneververified}) to be equal to its left-hand side.
It results that all the values $\frac{h_i}{b_j}$ are different for $i=1,...,m+1$ and $j=1,...,n$.
\end{proof}

\begin{lemma} \label{lemmaconvexhullM}
All the columns of $M$ are situated inside the convex hull of the columns of $M_3$.
\end{lemma}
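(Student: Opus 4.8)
The plan is to treat the three blocks $M_1$, $M_2$, $M_3$ separately and to reduce everything to elementary planar geometry, exploiting that (almost) all points involved sit on the parabola $y=x^2$. First I would record the shape of $M_3$: besides the two points $(0,0)$ and $(0,-1)$ on the $y$-axis, its remaining columns $\big(\tfrac{h_i}{b_j},\tfrac{h_i^2}{b_j^2}\big)$ lie on the parabola $y=x^2$ and all have abscissa $\tfrac{h_i}{b_j}=\tfrac{i(m+1+aj)}{m+1}>1$, since $i\ge 1$ and $aj>0$. Assuming, as we may for SET-COVER, that every element lies in some subset, $M_3$ contains at least one such parabola point $Q=(\xi,\xi^2)$ with $\xi>1$; this single $Q$ will serve as the ``far'' vertex of a containing triangle for \emph{every} column of $M_2$.

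The columns of $M_3$ are trivially in $\conv(M_3)$, and the columns of $M_1$ are easy: $M_1(:,i)=(0,-h_i)$ with $h_i=\tfrac{i}{m+1}\in(0,1)$, so each is a convex combination of $(0,0)$ and $(0,-1)$, both columns of $M_3$. The real work is the block $M_2$. For a column $P=(b_j,b_j^2)$ with $b_j=\tfrac{1}{m+1+aj}\in(0,1)$, I would show $P\in\conv\{(0,0),(0,-1),Q\}\subseteq\conv(M_3)$ by verifying the three half-plane inequalities of this (non-degenerate, since $\xi>0$) triangle. Against the edge $[(0,0),(0,-1)]$ on the $y$-axis we need $b_j>0$, which is immediate. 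Against the chord $[(0,0),Q]$, whose supporting line is $y=\xi x$, we need $P$ on the same side as $(0,-1)$, i.e. $b_j^2\le \xi b_j$, which holds since $b_j<1<\xi$.

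The decisive inequality is the one against the chord $[(0,-1),Q]$, whose line is $y=-1+\tfrac{\xi^2+1}{\xi}x$: we need $P$ on the same side as $(0,0)$, i.e. $b_j^2\ge -1+\big(\xi+\tfrac1\xi\big)b_j$. Rearranging gives $(b_j-\xi)(b_j-\tfrac1\xi)\ge 0$, and since $b_j<\xi$ this is equivalent to the clean condition $\xi b_j\le 1$. Here is where the choice $a=\tfrac1n$ is used, exactly as in \cref{M3alldifferent}: writing $\xi b_j=\tfrac{i(m+1+aj')}{(m+1)(m+1+aj)}$ for $Q$ coming from an incidence $(i,j')$, the numerator is at most $m(m+1+an)=m(m+2)=m^2+2m$, while the denominator is at least $(m+1)^2=m^2+2m+1$, so $\xi b_j<1$ for every column of $M_2$ and every parabola point of $M_3$.

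I expect this last inequality to be the only real obstacle --- not because it is deep, but because it is precisely where the tuned constants of the reduction ($b_j=\tfrac{1}{m+1+aj}$, $a=\tfrac1n$, $h_i=\tfrac{i}{m+1}$) must conspire to keep all of $M_2$ strictly inside the hull; everything else is bookkeeping about which side of a line a point lies on. Chaining the three memberships over all columns of $M_1$, $M_2$, $M_3$ then yields that every column of $M$ lies in $\conv(M_3)$, as claimed.
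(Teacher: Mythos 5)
Your proof is correct, and on the part that matters for the statement as written it does more work than the paper's own argument. The paper's proof spends most of its effort showing that the columns of $M_3$ are in convex position (the parabola points with abscissa at least $1$ form a cyclic polytope, and the chord through any two of them meets the $y$-axis at $y=-x_1x_2\leq -1$, so adding $(0,-1)$ keeps every point a vertex), and then dismisses the containment of $M_1$ and $M_2$ with ``it is then easy to check.'' You go the other way: you take the containment seriously, reduce it to membership of each $(b_j,b_j^2)$ in the single triangle $\conv\{(0,0),(0,-1),Q\}$ for one parabola point $Q=(\xi,\xi^2)$ of $M_3$, and isolate the one non-trivial half-plane inequality, $\xi b_j\leq 1$, which you verify via $i(m+1+aj')\leq m(m+2)<(m+1)^2\leq(m+1)(m+1+aj)$ --- correctly identifying this as the second place (after \cref{M3alldifferent}) where the choice $a=\tfrac1n$ is load-bearing. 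All three side tests check out, and your observation that a single $Q$ works uniformly for every column of $M_2$ is right. Two small remarks. First, your argument needs $M_3$ to contain at least one parabola point, i.e.\ at least one incidence $j\in C_i$; this is harmless since the paper assumes every element of the ground set belongs to some subset (it says so explicitly in the ``only if'' part of the reduction), but it is worth stating. Second, be aware that the later argument cites this lemma for the fact that the columns of $M_3$ are \emph{vertices} of their convex hull (so that they must all be selected in $W$); that convex-position claim is what the paper's proof actually establishes and yours does not, so if your version were substituted in, the vertex property would still need the paper's $y$-axis-intersection argument somewhere.
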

\begin{proof}
Except for $(0,-1)$, the columns of $M_3$ are located on the moment curve $y=x^2$.
It results that these points are the vertices of a convex polygon, known under the name of \textit{cyclic polytope}.
The intersection of the $y$-axis and the line connecting any two points of the set $\{(x,y)|y=x^2, x\geq 1\}$ is located strictly below the point $(0,-1)$.
Following the definitions of $h_i$ and $b_j$, we have $\frac{h_i}{b_j}\geq 1$ for any $i=1,...,m$ and $j=1,...,n$, which means that even with the addition of $(0,-1)$, the points of $M_3$ still form a convex polygon.
It is then easy to check that the points of $M_1$ and $M_2$ are inside the convex hull of $M_3$.
\end{proof}

\begin{lemma}\label{yesyes}
  The \textit{2-SSNMF} instance is a yes-instance if and only if the \textit{SET-COVER} instance is a yes-instance.
\end{lemma}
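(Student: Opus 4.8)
The plan is to prove both implications of the stated equivalence, using the geometry built into $(M,r)$: every column of $M$ lies on or below the moment curve $y=x^2$; by \cref{lemmaconvexhullM} the columns of $M_3$ are exactly the vertices of $\conv(M)$; and, by the intersection property recorded when $M_3$ was defined, a column $M_2(:,j)$ lies on the segment from $M_1(:,i)=(0,-h_i)^T$ to the column $(h_i/b_j,h_i^2/b_j^2)^T$ of $M_3$ precisely when $j\in C_i$. I would work with the normalized model in which each column of $H$ satisfies $\sum_i H(i,j)=1$, so that ``$M(:,j)$ is a $2$-sparse combination of selected columns'' means geometrically that $M(:,j)$ lies on a segment joining two selected columns; the two auxiliary columns $(0,0)^T$ and $(0,-1)^T$ of $M_3$ then serve to realize the $M_1$ columns as such segments.

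For the direction SET-COVER $\Rightarrow$ 2-SSNMF, given a cover $C'\subseteq C$ with $|C'|\le K$, I would let $\mathcal{J}$ index all $\sum_{i}|C_i|+2$ columns of $M_3$ together with the $|C'|\le K$ columns $\{M_1(:,i):C_i\in C'\}$, so that $|\mathcal{J}|\le \sum_i|C_i|+2+K=r$. Feasibility is then checked column by column: the columns of $M_3$ are selected; each $M_1(:,i)$ lies on the segment $[(0,0)^T,(0,-1)^T]$; and each $M_2(:,j)$ lies on the segment from $M_1(:,i)$ to $(h_i/b_j,h_i^2/b_j^2)^T$ for some $i$ with $j\in C_i$, which exists because $C'$ covers $S$.

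For the direction 2-SSNMF $\Rightarrow$ SET-COVER, I would start from a feasible $\mathcal{J}$ with $|\mathcal{J}|\le r$ and first force all columns of $M_3$ into $\mathcal{J}$: by \cref{lemmaconvexhullM} they are extreme points of $\conv(M)$, so none of them can lie on a segment between two other columns, and each must therefore be selected. This consumes $\sum_i|C_i|+2$ indices and leaves room for at most $K$ further columns. I would then show that the only pairs of columns whose connecting segment passes through a given interior point $M_2(:,j)$ are of the form $\{M_1(:,i),(h_i/b_j,h_i^2/b_j^2)^T\}$ with $j\in C_i$: a chord between two points of the parabola lies strictly above it, hence cannot meet the on-curve point $M_2(:,j)$, while a segment from an axis point $(0,c)^T$ through $M_2(:,j)$ meets the parabola a second time at $x=-c/b_j$, which is a column of $M_3$ only when $c=-h_i$ with $j\in C_i$. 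Hence expressing every $M_2(:,j)$ forces, for each $j$, some $M_1(:,i)$ with $j\in C_i$ into $\mathcal{J}$, so the at most $K$ selected $M_1$-indices form a set cover of size $\le K$.

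The main obstacle is the uniqueness-of-representation step in the hard direction, namely excluding every accidental way of writing an interior point $M_2(:,j)$ as a combination of two selected columns. This is exactly where the moment-curve placement and the choice $a=1/n$ are needed: \cref{M3alldifferent} guarantees that distinct incidences $(i,j)$ give distinct columns of $M_3$ and, together with the restriction $i\le m$, that the second parabola intersection coming from the auxiliary axis points $(0,0)^T$ and $(0,-1)^T$ is never an available column. Getting the counting exactly tight---the ``$+2$'' for $(0,0)^T,(0,-1)^T$ and the residual budget $K$---also hinges on the convex-combination convention, which is what makes the extreme-point argument force all of $M_3$ into $\mathcal{J}$.
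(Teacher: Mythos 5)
Your overall strategy coincides with the paper's: the easy direction is identical (select all of $M_3$ plus the $M_1$-columns indexed by the cover, and verify feasibility column by column), and in the hard direction you use the same extreme-point argument from \cref{lemmaconvexhullM} to force all $\sum_i|C_i|+2$ columns of $M_3$ into $\mathcal{J}$, leaving a budget of $K$. Your collinearity analysis (chords of the parabola lie strictly above it; a segment from an axis point $(0,c)^T$ through $(b_j,b_j^2)^T$ re-meets the parabola at $x=-c/b_j$, which is an available column only for $c=-h_i$ with $j\in C_i$, using \cref{M3alldifferent} and the restriction $i\le m$) is in fact spelled out more explicitly than in the paper, which only asserts that no two columns of $M_3$ combine to reach a column of $M_2$.

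There is, however, one genuine gap in your hard direction. You conclude that ``expressing every $M_2(:,j)$ forces, for each $j$, some $M_1(:,i)$ with $j\in C_i$ into $\mathcal{J}$, so the at most $K$ selected $M_1$-indices form a set cover.'' This overlooks the case where a column $M_2(:,j)$ is itself one of the $K$ remaining selected columns: such a column is then represented trivially by itself (a $1$-sparse combination), no $M_1$-column is forced on its behalf, and it consumes one unit of the budget without directly contributing a subset. A feasible dictionary of the form $[M_3, W']$ with $W'$ containing columns of $M_2$ is perfectly legal, so your final counting step fails as stated. The paper closes exactly this hole with a replacement argument: since every $j$ lies in some $C_i$, each selected column $M_2(:,j)$ can be swapped for a column $M_1(:,i)$ with $j\in C_i$ without increasing $|W'|$ or breaking feasibility (your own uniqueness analysis shows a selected $M_2$-column cannot serve in the representation of any \emph{other} column, so the swap is safe), after which the $K$ columns of $W'$ all come from $M_1$ and yield the desired cover. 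Adding this one step makes your proof complete.
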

\noindent
\begin{proof}

\noindent
\textbf{The \textit{if} part.} Suppose we have an optimal cover $C'\subseteq C$ of the \textit{SET-COVER} instance with $|C'|\leq K$.
From this solution, we build a solution to the \textit{2-SSNMF} instance as follows:
\begin{itemize}
    \item For the dictionary matrix $W$, we concatenate $M_3$ and the columns of $M_1$ corresponding to the subsets in $C'$.
By this way, the number of columns of $W$ is less or equal than $r=\sum_{i=1}^m|C_i| + 2 + K$.
    \item With $M_3$ being in the dictionary, it is easy to construct the columns of $H$ corresponding to $[M_1,M_3]$ in $M$: it is trivial for $M_3$ and, for $M_1$, the two nonnegative entries of a column of $H$ are the two coefficients of the convex combination of $(0,0)$, $(0,-1)$.
    Moreover, since $W$ also contains the $K$ columns of $M_1$ corresponding to the cover $C'$, every column coming from $M_2$ in $M$ can be expressed as the convex combination of exactly two columns of $W$ (see the reduction above).
    By this way, we have $H\geq 0$, a column-wise $2$-sparse matrix, such that $M=WH$.
\end{itemize}

\noindent
\textbf{The \textit{only if} part.}
Suppose that we have a solution $(W,H)$ of the \textit{2-SSNMF} instance such that $M=WH$, $W$ having at most $r$ columns and $H\geq 0$ being a column-wise $2$-sparse matrix.
From this factorization, we show how to extract a cover $C'$ made of at most $K$ subsets.
All the columns of $M_3$ are necessarily in $W$ since they are the vertices of a convex polygon (see Lemma \ref{lemmaconvexhullM}).
Since, by construction, no convex combination of two points in $M_3$ can reach the $n$ points in $M_2$, we must have $W=[M_3,W']\Pi$ with the columns of $W'$ coming either from $M_1$ or from $M_2$.
The number of columns of $W'$ is therefore $r-\left(\sum_{i=1}^m|C_i| + 2\right)=K$.
It remains to show how to construct a solution to the \textit{SET-COVER} instance from $W'$.
For every point in $M_2$, it is possible to find a point in $M_1$ and a point $M_3$ such that the three points are lined up (it is always possible to find such points since we suppose that every element of the ground set belongs to at least one subset in the \textit{SET-COVER} instance).
It means that we can replace all the columns coming from $M_2$ in $W'$ by columns of $M_1$ without increasing the size of $W'$.
In order to maintain the equality $M=WH$, it is easy to update the matrix $H$ accordingly while keeping it column-wise 2-sparse.
Finally, with the $K$ columns of $W'$ coming from $M_1$, we have identified a cover $C'$ composed of $K$ subsets for the \textit{SET-COVER} instance.
\end{proof}

\begin{proof}{Proof of Theorem~1.}
2-SSNMF is in NP since we can check in polynomial time that a given pair $(W,H)$ is a solution of a 2-SSNMF instance.
With the reduction from the SET-COVER problem presented above and Lemma \ref{yesyes}, we can conclude that 2-SSNMF is NP-hard.
\end{proof}

\paragraph{Illustration of the reduction.}
\setcounter{MaxMatrixCols}{20}
\noindent
From the \textit{SET-COVER} instance: $n=5$, $m=4$, $K=2$, $C_1=\{2,4\}$, $C_2=\{1,2,3\}$, $C_3=\{3,4\}$ and $C_4=\{4,5\}$, the reduction presented above leads to the following \textit{2-SSNMF} instance: $r=13$ and $M=[M_1,M_2,M_3]$ with
\[
    M_1 = \begin{pmatrix}
    0 & 0 & 0 & 0 \\
    -h_1 & -h_2 & -h_3 & -h_4
    \end{pmatrix}, 
    M_2 = \begin{pmatrix}
    b_1 & b_2 & b_3 & b_4 & b_5 \\
    b_1^2 & b_2^2 & b_3^2 & b_4^2 & b_5^2
    \end{pmatrix},\] 
    \[\text{ and }
    M_3 = \begin{pmatrix}
    \frac{h_1}{b_2} & \frac{h_1}{b_4} & \frac{h_2}{b_1} & \frac{h_2}{b_2} & \frac{h_2}{b_3} & \frac{h_3}{b_3} & \frac{h_3}{b_4} & \frac{h_4}{b_4} & \frac{h_4}{b_5}  & 0 & 0 \\
    \frac{h_1^2}{b_2^2} & \frac{h_1^2}{b_4^2} & \frac{h_2^2}{b_1^2} & \frac{h_2^2}{b_2^2} & \frac{h_2^2}{b_3^2} & \frac{h_3^2}{b_3^2} & \frac{h_3^2}{b_4^2} & \frac{h_4^2}{b_4^2} & \frac{h_4^2}{b_5^2}  & 0 & -1
    \end{pmatrix},
  \]  
where $h_i=\frac{i}{5}$ for $i=1,...4$ and $b_j=(5+\frac{j}{5})^{-1}$ for $j=1,...5$ (see 
Figure \ref{examplecomplexity}).

\noindent
A solution to the \textit{SET-COVER} instance is $C'=\{C_2,C_4\}$ and the corresponding \textit{2-SSNMF} solution is

\[
W = \begin{pmatrix}
    \frac{h_1}{b_2} & \frac{h_1}{b_4} & \frac{h_2}{b_1} & \frac{h_2}{b_2} & \frac{h_2}{b_3} & \frac{h_3}{b_3} & \frac{h_3}{b_4} & \frac{h_4}{b_4} & \frac{h_4}{b_5}  & 0 & 0 & 0 & 0\\
    \frac{h_1^2}{b_2^2} & \frac{h_1^2}{b_4^2} & \frac{h_2^2}{b_1^2} & \frac{h_2^2}{b_2^2} & \frac{h_2^2}{b_3^2} & \frac{h_3^2}{b_3^2} & \frac{h_3^2}{b_4^2} & \frac{h_4^2}{b_4^2} & \frac{h_4^2}{b_5^2}  & 0 & -1 & -h_2 & -h_4
    \end{pmatrix}, \text{ and }
\]
$H = $
{
\footnotesize
\[
H = \begin{pmatrix}
    0 & 0 & 0 & 0         & 0 & 0 & 0 & 0         & 0  & \\  
    0 & 0 & 0 & 0         & 0 & 0 & 0 & 0         & 0  & \\  
    0 & 0 & 0 & 0         & \beta_{2,1} & 0 & 0 & 0 & 0  & \\    
    0 & 0 & 0 & 0         & 0 & \beta_{2,2} & 0 & 0 & 0 & \\    
    0 & 0 & 0 & 0         & 0 & 0 & \beta_{2,3} & 0  & 0 & \\    
    0 & 0 & 0 & 0         & 0 & 0 & 0 & 0        & 0     & I_{11} \\    
    0 & 0 & 0 & 0         & 0 & 0 & 0 & 0        & 0      & \\  
    0 & 0 & 0 & 0         & 0 & 0 & 0 & \beta_{4,4}       & 0 & \\  
    0 & 0 & 0 & 0         & 0 & 0 & 0 & 0         & \beta_{4,5} & \\ 
    1-\alpha_1 & 1-\alpha_2 & 1-\alpha_3 & 1-\alpha_4 & 0 & 0 & 0 & 0         & 0 & \\ 
    \alpha_1 & \alpha_2 & \alpha_3 & \alpha_4 & 0 & 0 & 0 & 0         & 0  & \\ 
    0 & 0 & 0 & 0         & 1-\beta_{2,1} & 1-\beta_{2,2} & 1-\beta_{2,3} & 0   & 0 &  0 \hdots 0\\   
    0 & 0 & 0 & 0         & 0   & 0   & 0   & 1-\beta_{4,4} & 1-\beta_{4,5} & 0 \hdots 0\\ 
    \end{pmatrix},
\]
}
for which we have $M=WH$ when $\alpha_i=h_i$, $\beta_{i,j}=\frac{b_j^2}{h_i}$.

\begin{figure}
\centering
    \def\ra{8}
    \def\rx{2}
    \def\rb{0}
    \def\m{4}
    \def\n{5}
    \def\a{1/\n}
    \def\XMAX{(\m*(\m+1+\a*\n))/(\m+1)}
    \def\IXMAX{5}
    \def\YMAX{\XMAX*\XMAX}
    \def\IYMAX{\IXMAX*\IXMAX}
    \def\INTENSITENOIR{20}
    \def\INTENSITEROUGE{80}
    \begin{tikzpicture}[scale=2.5,cap=round,spy using outlines={
        rectangle,
        magnification=4,
        width=8cm,
        height=8cm,
        connect spies,
      }]
      \tikzstyle{axes}=[]
      \tikzstyle{important line}=[ultra thick]
      \tikzstyle{information text}=[rounded corners,fill=red!10,inner sep=1ex]

      \begin{scope}[style=axes]
        \draw[black!\INTENSITENOIR] (0,0) -- ({(\IXMAX)/(\rx)},0) node[right] {};
        \draw[black!\INTENSITENOIR] (0,-1.1/\ra) -- (0,{(\IYMAX)/(\ra)}) node[above] {};
        \foreach \x/\xtext in {1,...,\IXMAX}
        \draw[black!\INTENSITENOIR]({(\x+\rb)/(\rx)},1pt) -- ({(\x+\rb)/(\rx)},-1pt) node[below,fill=white]
        {$\xtext$};
        \foreach \x/\xtext in {0}
        \draw[black!\INTENSITENOIR](\x+\rb,1pt) -- (\x+\rb,-1pt) node[left,fill=white]
        {$\xtext$};
        \foreach \y/\ytext in {-1,1,4,9,16,25}
        \draw[black!\INTENSITENOIR](1pt,\y/\ra) -- (-1pt,\y/\ra) node[left,fill=white]
        {$\ytext$};
      \end{scope}
      \draw[black!\INTENSITENOIR,domain={0:\XMAX},smooth,samples=50] plot (\x/\rx,{(\x-\rb)*(\x-\rb)/\ra});
      \node[circle,inner sep=2pt,red,draw, very thick] at (0,{0}) {};
      \node[circle,inner sep=2pt,red,draw, very thick] at (0,{-1/\ra}) {};

      \foreach \i in {1,...,\m} {
        \def\hi{(\i)/(\m+1)}
        \node[diamond,inner sep=2pt,blue,draw, very thick] at (0,{(-\hi)/(\ra))}) {};
      }

      \foreach \j in {1,...,\n} {
        \def\bj{1/(\m+1+\a*\j)}
        \node[regular polygon,regular polygon sides=4,inner sep=2pt,LimeGreen,draw, very thick] at ({\bj/2},{\bj*\bj/\ra}) {} ;
      }

      \def\hi{(1)/(\m+1)}
      \foreach \j in {2,4} {
        \def\bj{1/(\m+1+\a*\j)}
        \def\xij{(\hi)/(\bj)}
        \node[circle,inner sep=2pt,red,draw, very thick] (noeud1\j) at ({\xij/\rx},{\xij*\xij/\ra}) {};
      }
      \def\hi{(2)/(\m+1)}
      \foreach \j in {1,2,3} {
        \def\bj{1/(\m+1+\a*\j)}
        \def\xij{(\hi)/(\bj)}
        \node[circle,inner sep=2pt,red,draw, very thick] (noeud2\j) at ({\xij/\rx},{\xij*\xij/\ra}) {};
      }
      \def\hi{(3)/(\m+1)}
      \foreach \j in {3,4} {
        \def\bj{1/(\m+1+\a*\j)}
        \def\xij{(\hi)/(\bj)}
        \node[circle,inner sep=2pt,red,draw, very thick] (noeud3\j) at ({\xij/\rx},{\xij*\xij/\ra}) {};
      }
      \def\hi{(4)/(\m+1)}
      \foreach \j in {4,5} {
        \def\bj{1/(\m+1+\a*\j)}
        \def\xij{(\hi)/(\bj)}
        \node[circle,inner sep=2pt,red,draw, very thick] (noeud4\j) at ({\xij/\rx},{\xij*\xij/\ra}) {};
      }

      \draw[red!\INTENSITEROUGE,dashed](0,{0/\ra}) -- (0,{-1/\ra});
      \draw[red!\INTENSITEROUGE,dashed](0,{0/\ra}) -- (noeud45);
      \draw[red!\INTENSITEROUGE,dashed](0,{-1/\ra}) -- (noeud12);
      \draw[red!\INTENSITEROUGE,dashed](noeud12) -- (noeud14);
      \draw[red!\INTENSITEROUGE,dashed](noeud14) -- (noeud21);
      \draw[red!\INTENSITEROUGE,dashed](noeud21) -- (noeud22);
      \draw[red!\INTENSITEROUGE,dashed](noeud22) -- (noeud23);
      \draw[red!\INTENSITEROUGE,dashed](noeud23) -- (noeud33);
      \draw[red!\INTENSITEROUGE,dashed](noeud33) -- (noeud34);
      \draw[red!\INTENSITEROUGE,dashed](noeud34) -- (noeud44);
      \draw[red!\INTENSITEROUGE,dashed](noeud44) -- (noeud45);

    \end{tikzpicture}

    \begin{tikzpicture}[scale=1.6]
      \tikzstyle{axes}=[]
      \tikzstyle{important line}=[ultra thick]
      \tikzstyle{information text}=[rounded corners,fill=red!10,inner sep=1ex]
      \def\ra{0.5}
      \def\rx{1/20}
      \begin{scope}[style=axes]
        \draw[black!\INTENSITENOIR] (0,0) -- ({(0.3)/(\rx)},0) node[right] {};
        \draw[black!\INTENSITENOIR] (0,-1.1/\ra) -- (0,{(1)/(\ra)}) node[above] {};
        \foreach \x/\xtext in {0}
        \draw[black!\INTENSITENOIR](20*\x+\rb,1pt) -- (20*\x+\rb,-1pt) node[left,fill=white]
        {$\xtext$};
        \foreach \x/\xtext in {0.3}
        \draw[black!\INTENSITENOIR](20*\x+\rb,1pt) -- (20*\x+\rb,-1pt) node[below,fill=white]
        {$\xtext$};
        \foreach \y/\ytext in {-1,1}
        \draw[black!\INTENSITENOIR](1pt,\y/\ra) -- (-1pt,\y/\ra) node[left,fill=white]
        {$\ytext$};
        \draw[black!\INTENSITENOIR,domain={0:0.3},smooth,samples=50] plot (20*\x,{(\x-\rb)*(\x-\rb)/\ra}) node[above,fill=white]
        {$y=x^2$};
      \end{scope}

      \node[circle,inner sep=2pt,red,draw, very thick] at (0,{0}) {};
      \node[circle,inner sep=2pt,red,draw, very thick] at (0,{-1/\ra}) {};

      \foreach \i in {1,...,\m} {
        \def\hi{(\i)/(\m+1)}
        \node[diamond,inner sep=2pt,blue,draw, very thick] at (0,{(-\hi)/(\ra))}) {};
      }

      \foreach \j in {1,...,\n} {
        \def\bj{1/(\m+1+\a*\j)}
        \node[regular polygon,regular polygon sides=4,inner sep=2pt,LimeGreen,draw, very thick] at ({20*\bj},{\bj*\bj/\ra}) {} ;
      }
      \draw[red!\INTENSITEROUGE,dashed](0,{0/\ra}) -- (0,{-1/\ra});
      \draw[red!\INTENSITEROUGE,dashed](0,{0/\ra}) -- ({20*5/24},{1/\ra});
      \draw[red!\INTENSITEROUGE,dashed](0,{-1/\ra}) -- ({20*3/10},{(-448/1125)/\ra});

    \end{tikzpicture}
    
  \caption{Example of a 2-SSNMF instance constructed from the following SET-COVER instance: $n=5$, $m=4$, $C_1=\{2,4\}$, $C_2=\{1,2,3\}$, $C_3=\{3,4\}$ and $C_4=\{4,5\}$.
    The second picture is a zoom of the first picture on the $[0,0.3]\times [-1,1]$ box.
    Red circles correspond to the points of $M_3$, blue diamonds to the points of $M_1$ and green squares to the points of $M_2$.
    The red dashed lines are the edges of the convex hull of the points in $M_3$.}
  \label{examplecomplexity}
\end{figure}
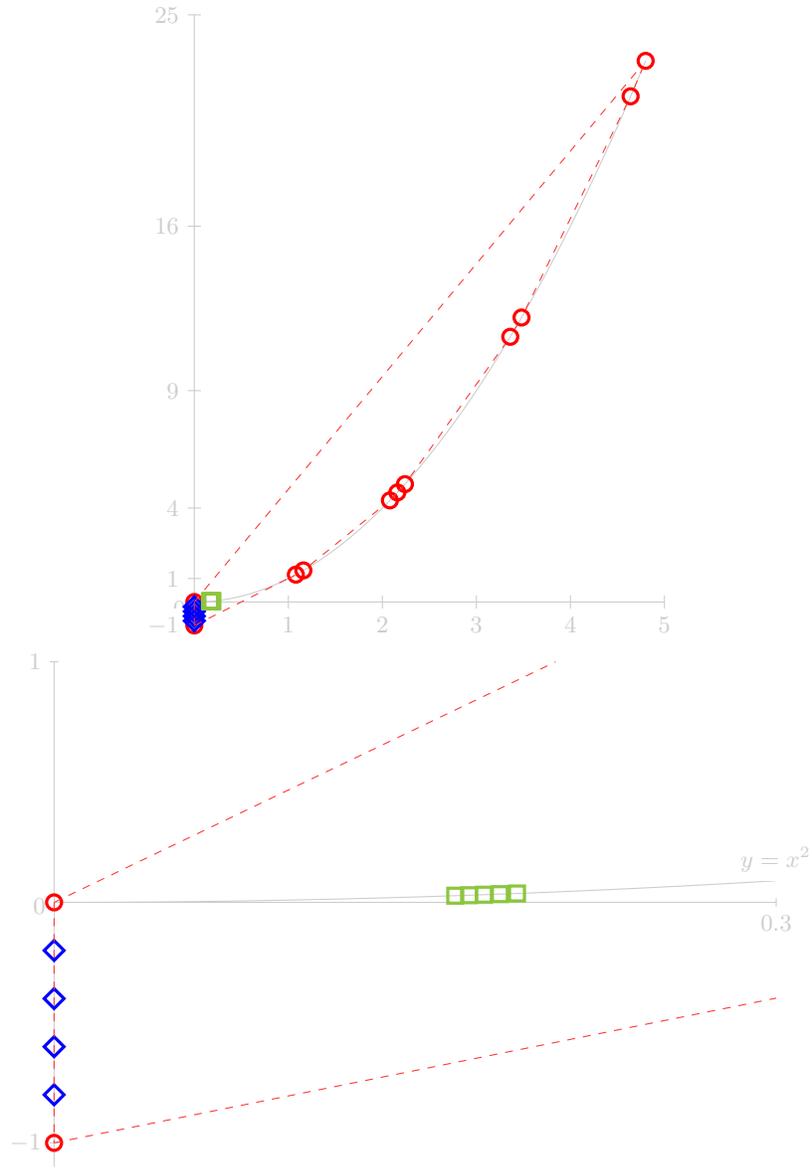

\bibliographystyle{splncs04}
\bibliography{mybib}

\begin{thebibliography}{10}
\providecommand{\url}[1]{\texttt{#1}}
\providecommand{\urlprefix}{URL }
\providecommand{\doi}[1]{https://doi.org/#1}

\bibitem{araujo_successive_2001}
Ara{\'u}jo, M.C.U., Saldanha, T.C.B., Galv{\~a}o, R.K.H., Yoneyama, T., Chame,
  H.C., Visani, V.: The successive projections algorithm for variable selection
  in spectroscopic multicomponent analysis. Chemometrics and Intelligent
  Laboratory Systems  \textbf{57}(2),  65--73 (2001)

\bibitem{arora_computing_2012}
Arora, S., Ge, R., Kannan, R., Moitra, A.: Computing a nonnegative matrix
  factorization \textendash{} provably. In: Proceedings of the Forty-Fourth
  Annual {{ACM}} Symposium on Theory of Computing. pp. 145--162 (2012)

\bibitem{bioucas2012hyperspectral}
Bioucas-Dias, J.M., Plaza, A., Dobigeon, N., Parente, M., Du, Q., Gader, P.,
  Chanussot, J.: Hyperspectral unmixing overview: Geometrical, statistical, and
  sparse regression-based approaches. IEEE journal of selected topics in
  applied earth observations and remote sensing  \textbf{5}(2),  354--379
  (2012)

\bibitem{cohen_nonnegative_2019-1}
Cohen, J.E., Gillis, N.: Nonnegative {{Low}}-rank {{Sparse Component
  Analysis}}. In: {{IEEE International Conference}} on {{Acoustics}},
  {{Speech}} and {{Signal Processing}} ({{ICASSP}}). pp. 8226--8230 (2019)

\bibitem{el2010safe}
El~Ghaoui, L., Viallon, V., Rabbani, T.: Safe feature elimination in sparse
  supervised learning technical report no. Tech. rep., UC/EECS-2010-126, EECS
  Dept., University of California at Berkeley (2010)

\bibitem{fu2019nonnegative}
Fu, X., Huang, K., Sidiropoulos, N.D., Ma, W.K.: Nonnegative matrix
  factorization for signal and data analytics: Identifiability, algorithms, and
  applications. IEEE Signal Processing Magazine  \textbf{36}(2),  59--80 (2019)

\bibitem{gillis_successive_2014}
Gillis, N.: Successive {{Nonnegative Projection Algorithm}} for {{Robust
  Nonnegative Blind Source Separation}}. SIAM Journal on Imaging Sciences pp.
  1420--1450 (2014)

\bibitem{gillis_why_2014}
Gillis, N.: The why and how of nonnegative matrix factorization.
  Regularization, Optimization, Kernels, and Support Vector Machines
  \textbf{12}(257),  257--291 (2014)

\bibitem{hoyer2002non}
Hoyer, P.O.: Non-negative sparse coding. In: Proceedings of the 12th IEEE
  Workshop On Neural Networks for Signal Processing. pp. 557--565 (2002)

\bibitem{hoyer2004non}
Hoyer, P.O.: Non-negative matrix factorization with sparseness constraints.
  Journal of machine learning research  \textbf{5},  1457--1469 (2004)

\bibitem{kim_sparse_2007}
Kim, H., Park, H.: Sparse non-negative matrix factorizations via alternating
  non-negativity-constrained least squares for microarray data analysis.
  Bioinformatics  \textbf{23}(12),  1495--1502 (2007)

\bibitem{kumar_fast_2013}
Kumar, A., Sindhwani, V., Kambadur, P.: Fast {{Conical Hull Algorithms}} for
  {{Near}}-separable {{Non}}-negative {{Matrix Factorization}}. In: Proceedings
  of the 30th {{International Conference}} on {{Machine Learning}} (2013)

\bibitem{lee1999learning}
Lee, D.D., Seung, H.S.: Learning the parts of objects by non-negative matrix
  factorization. Nature  \textbf{401}(6755),  788--791 (1999)

\bibitem{ma2013signal}
Ma, W.K., Bioucas-Dias, J.M., Chan, T.H., Gillis, N., Gader, P., Plaza, A.J.,
  Ambikapathi, A., Chi, C.Y.: A signal processing perspective on hyperspectral
  unmixing: Insights from remote sensing. IEEE Signal Processing Magazine
  \textbf{31}(1),  67--81 (2014)

\bibitem{naanaa2005blind}
Naanaa, W., Nuzillard, J.M.: Blind source separation of positive and partially
  correlated data. Signal Processing  \textbf{85}(9),  1711--1722 (2005)

\bibitem{nadisicexact}
Nadisic, N., Vandaele, A., Gillis, N., Cohen, J.E.: Exact {Sparse}
  {Nonnegative} {Least} {Squares}. In: {{IEEE International Conference}} on
  {{Acoustics}}, {{Speech}} and {{Signal Processing}} ({{ICASSP}}). pp. 5395 --
  5399 (2020)

\bibitem{natarajan1995sparse}
Natarajan, B.K.: Sparse approximate solutions to linear systems. SIAM journal
  on computing  \textbf{24}(2),  227--234 (1995)

\bibitem{sun2011underdetermined}
Sun, Y., Xin, J.: Underdetermined sparse blind source separation of nonnegative
  and partially overlapped data. SIAM Journal on Scientific Computing
  \textbf{33}(4),  2063--2094 (2011)

\bibitem{vavasis_complexity_2010}
Vavasis, S.A.: On the {{Complexity}} of {{Nonnegative Matrix Factorization}}.
  SIAM Journal on Optimization  \textbf{20}(3),  1364--1377 (2010)

\bibitem{zhu2017hyperspectral}
Zhu, F.: Hyperspectral unmixing: ground truth labeling, datasets, benchmark
  performances and survey. arXiv preprint arXiv:1708.05125  (2017)

\bibitem{fyzhu2014hyperspectraldata}
Zhu, F., Wang, Y., Xiang, S., Fan, B., Pan, C.: Structured sparse method for
  hyperspectral unmixing. ISPRS Journal of Photogrammetry and Remote Sensing
  \textbf{88},  101--118 (2014)

\end{thebibliography}
\end{document}